\newtheorem{theorem}{Theorem}
\newtheorem{lemma}{Lemma}
\newtheorem{corollary}{Corollary}
\newcommand{\IF}{\textbf{if}\xspace}
\newcommand{\THEN}{\textbf{then}\xspace}
\newcommand{\REPEAT}{\textbf{repeat}\xspace}
\renewcommand{\epsilon}{\varepsilon}
\newcommand{\R}{\mathds{R}}
\newcommand{\N}{\mathds{N}}
\newcommand{\E}[1]{\text{E}\left(#1\right)}
\newcommand{\Prob}[1]{\text{Prob}\left(#1\right)}
\newcommand{\Oh}[1]{\mathord{O}\mathord{\left(#1\right)}}
\newcommand{\LO}{\textup{LO}\xspace}
\newcommand{\OM}{\textup{OneMax}\xspace}
\newcommand{\Jump}{\textup{Jump}\xspace}
\newcommand{\EA}{\text{(1+1)~EA}\xspace}
\newcommand{\ie}{i.\,e.\xspace}
\newcommand{\migint}{\tau}
\newcommand{\paralleltime}{T^{\mathrm{par}}}
\newcommand{\sequentialtime}{T^{\mathrm{seq}}}
\newcommand{\communication}{T^{\mathrm{com}}}
\newcommand{\pinform}{p}
\newcommand{\pcomm}{\pinform}
\newcommand{\Tinform}[1]{\xi\left(
#1\right)}
\newcommand{\ignore}[1]{\ensuremath{}}
\DeclareMathOperator{\diam}{diam}
\definecolor{black}{rgb}{0.0,0.0,0.0}
\pgfplotsset{compat=1.3}
\definecolor{black}{rgb}{0.0,0.0,0.0}
\begin{document}


\title{General Upper Bounds on the Running Time of~Parallel~Evolutionary~Algorithms\thanks{\ A preliminary version of this paper with parts of the results was published at PPSN~2010~\cite{Lassig2010a}.}}

\author{\name{\bf J{\"o}rg L{\"a}ssig} \hfill
\addr{jlaessig@hszg.de}\\
        \addr{Department of Electrical Engineering and Computer Science, \newline University of Applied Sciences Zittau/G\"orlitz, Germany}
\AND
       \name{\bf Dirk Sudholt} \hfill
       \addr{d.sudholt@sheffield.ac.uk}\\
        \addr{Department of Computer Science,}
        \addr{University of Sheffield, United Kingdom}
}

\maketitle

\begin{abstract}
We present a new method for analyzing the running time of parallel evolutionary algorithms with spatially structured populations. Based on the fitness-level method, it yields upper bounds on the expected parallel running time. This allows to rigorously estimate the speedup gained by parallelization. Tailored results are given for common migration topologies: ring graphs, torus graphs, hypercubes, and the complete graph. Example applications for pseudo-Boolean optimization show that our method is easy to apply and that it gives powerful results. In our examples the possible speedup increases with the density of the topology. Surprisingly, even sparse topologies like ring graphs lead to a significant speedup for many functions while not increasing the total number of function evaluations by more than a constant factor. We also identify which number of processors yield asymptotically optimal speedups, thus giving hints on how to parametrize parallel evolutionary algorithms.
\end{abstract}

\begin{keywords}
Parallel evolutionary algorithms, runtime analysis, island model, spatial structures
\end{keywords}

\section{Introduction}
\label{sec:Introduction}

Due to the increasing number of CPU cores, exploiting possible speedups by parallel computations is nowadays more important than ever.
Parallel evolutionary algorithms (EAs) form a popular class of heuristics with many applications to computationally expensive problems~\cite{Nedjah2006,Tomassini2005,Luque2011}. This includes \emph{island models}, also called \emph{distributed EAs}, \emph{multi-deme EAs} or \emph{coarse-grained EAs}. Evolution is parallelized by evolving subpopulations, called \emph{islands}, on different processors. Individuals are periodically exchanged in a process called \emph{migration}, where selected individuals, or copies of these, are sent to other islands, according to a migration topology that determines which islands are neighboring. Also more fine-grained models are known, where neighboring subpopulations communicate in every generation, first and foremost in \emph{cellular EAs}~\cite{Tomassini2005}.

By restricting the flow of information through spatial structures and/or infrequent communication, diversity in the whole system is increased. Researchers and practitioners frequently report that parallel EAs speed up the computation time, and at the same time lead to a better solution quality~\cite{Luque2011}.

Despite these successes, a long history~\cite{Cant`u-Paz1997} and very active research in this area~\cite{Rudolph2006,Alba2005,Luque2011}, the theoretical foundation of parallel EAs is still in its infancy. The impact of even the most basic parameters on performance is not well understood~\cite{Skolicki2005}. Past and present research is mostly empirical, and a solid theoretical foundation is missing. Theoretical studies are mostly limited to artificial settings. In the study of \emph{takeover times}, one asks how long it takes for a single optimum to spread throughout the whole parallel EA, if the EA uses only selection and migration, but neither mutation nor crossover~\cite{Rudolph2000a,Rudolph2006}. This gives a useful indicator for the speed at which communication is spread, but it does not give any formal results about the running time of evolutionary algorithms with mutation and/or crossover.

One way of gaining insight into the capabilities and limitations of parallel EAs is by means of rigorous running time analysis~\cite{Wegener2002}.
By asymptotic bounds on the running time we can compare different implementations of parallel EAs and assess the speedup gained by parallelization in a rigorous manner.

In~\cite{Lassig2010} the authors presented the first running time analysis of a parallel evolutionary algorithm with a non-trivial migration topology. It was demonstrated for a constructed problem that migration is essential in the following way. A suitably parametrized island model with migration has a polynomial running time while the same model without migration as well as comparable panmictic populations need exponential time, with overwhelming probability. Neumann, Oliveto, Rudolph, and Sudholt~\cite{Neumann2011} presented a similar result for island models using crossover. If islands perform crossover with immigrants during migration, this can drastically speed up optimization. This was demonstrated for a pseudo-Boolean example as well as for instances of the \textsc{VertexCover} problem~\cite{Neumann2011}.

In this work we take a broader view and consider the speedup gained by parallelization for various common pseudo-Boolean functions and function classes of varying difficulty. A general method is presented for proving upper bounds on the parallel running time of parallel EAs. The latter is defined as the number of generations of the parallel EA until a global optimum is found for the first time.
This allows us to estimate the speedup gained by parallelization, defined as the ratio of the expected parallel running time of an island model and the expected running time for a single island. It also can be used to determine how to choose the number of islands such that the parallel running time is reduced as much as possible, while still maintaining an asymptotically optimal speedup.

Our method is based on the \emph{fitness-level method} or \emph{method of $f$-based partitions}, a simple and well-known tool for the analysis of evolutionary algorithms~\cite{Wegener2002}. The main idea of this method is to divide the search space into sets $A_1, \dots, A_m$, strictly ordered according to fitness values of elements therein. Elitists EAs, i.\,e., EAs where the best fitness value in the population can never decrease, can only increase their current best fitness. If, for each set $A_i$ we know a lower bound $s_i$ on the probability that an elitist EA finds an improvement, i.\,e., for finding a new search point in a new best fitness-level set $A_{i+1} \cup \dots \cup A_m$, this gives rise to an upper bound $\sum_{i=1}^m 1/s_i$ on the expected running time. The method is described in more detail in Section~\ref{sec:Preliminaries}.

In Section~\ref{sec:general-upper-bounds} we first derive a general upper bound for parallel EAs, based on fitness levels.
Our general method is then tailored towards different spatial structures often used in fine-grained or cellular evolutionary algorithms and parallel architectures in general: ring graphs (Theorem~\ref{the:method-ring} in Section~\ref{sec:ring}), torus graphs (Theorem~\ref{the:method-torus} in Section~\ref{sec:torus}), hypercubes (Theorem~\ref{the:method-hypercube} in Section~\ref{sec:hypercube}) and complete graphs (Theorems~\ref{the:method-completegraph} and~\ref{the:method-completegraph-refined} in Section~\ref{sec:PerfectParallelization}).

The only assumption made is that islands run elitist algorithms, and that in each generation each island has a chance of transmitting individuals from its best current fitness level to each neighboring island, independently with probability at least $\pinform$. We call the latter the \emph{transmission probability}. It can be used to model various stochastic effects such as disruptive variation operators, the impact of selection operators, probabilistic migration, probabilistic emigration and immigration policies, and transient faults in the network. This renders our method widely applicable to a broad range of settings.

\subsection{Main Results}

Our estimates of parallel running times from Theorems~\ref{the:method-ring},~\ref{the:method-torus},~\ref{the:method-hypercube},~\ref{the:method-completegraph}, and~\ref{the:method-completegraph-refined} are summarized in the following theorem, hence characterizing our main results. Throughout this work $\mu$ always denotes the number of islands.
\begin{theorem}
\label{the:generalBounds}
Consider an island model with $\mu$ islands where each island runs an elitist EA.
For each island let there be a fitness-based partition $A_1, \dots, A_m$ such that for all $1 \le i < m$ all points in $A_i$ have a strictly worse fitness than all points in $A_{i+1}$, and $A_m$ contains all global optima. We say that an island is in~$A_i$ if the best search point on the island is in~$A_i$. Let $s_i$ be a lower bound for the probability that in one generation a fixed island in $A_i$ finds a search point in $A_{i+1} \cup \dots \cup A_m$.

Further assume that for each edge in the migration topology in every iteration there is a probability of at least $\pinform$ that the following holds, independently from other edges and for all~$1 \le i < m$. If the source island is in $A_i$ then after the generation the target island is in~$A_i \cup \dots \cup A_m$.
Then the expected parallel running time of the island model is bounded by
\begin{enumerate}
\item $\Oh{\frac{1}{\pinform^{1/2}} \sum_{i=1}^{m-1} \frac{1}{s_i^{1/2}}} + \frac{1}{\mu} \sum_{i=1}^{m-1} \frac{1}{s_i}$
    for every ring graph or any other strongly connected\footnote{A directed graph is strongly connected if for each pair of vertices $u, v$ there is a directed path from~$u$ to~$v$ and vice versa.} topology,
\item $\Oh{\frac{1}{\pinform^{2/3}} \sum_{i=1}^{m-1} \frac{1}{s_i^{1/3}}} + \frac{1}{\mu} \sum_{i=1}^{m-1} \frac{1}{s_i}$ for every undirected grid or torus graph with side lengths at least $\sqrt{\mu} \times \sqrt{\mu}$,
\item $\Oh{\frac{m \log(\mu) + \sum_{i=1}^{m-1} \log(1/s_i)}{\pinform}} + \frac{1}{\mu} \sum_{i=1}^{m-1} \frac{1}{s_i}$ for the $(\log \mu)$-dimensional hypercube graph,
\item $\Oh{m/\pinform} +\frac{1}{\mu} \sum_{i=1}^{m-1} \frac{1}{s_i}$ for the complete topology $K_\mu$, as well as \newline$\Oh{m + \frac{m \log \mu}{\min\{\pinform \mu, 1\}}} +\frac{1}{\mu} \sum_{i=1}^{m-1} \frac{1}{s_i}$.
\end{enumerate}
\end{theorem}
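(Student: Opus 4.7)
The plan is to recognise that Theorem~\ref{the:generalBounds} is a compilation of the four topology-specific Theorems~\ref{the:method-ring}, \ref{the:method-torus}, \ref{the:method-hypercube}, \ref{the:method-completegraph} and \ref{the:method-completegraph-refined} that will be proved in Sections~\ref{sec:ring}--\ref{sec:PerfectParallelization}. So the proof of Theorem~\ref{the:generalBounds} itself will be a one-line statement that each of its four parts is the corresponding later result; the real work is done in those later proofs, for which I would use the following common scheme.

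The unifying idea is a fitness-level argument combined with a takeover analysis on the migration graph. For each level $i<m$, I would track the random variable $k_i(t)$ counting the number of islands whose current best individual lies in $A_i\cup\dots\cup A_m$ at time $t$. Once $k_i(t)=k$, the probability that one of these islands leaves level~$i$ in the next generation is at least $1-(1-s_i)^k \ge \min\{ks_i/2,\,1/2\}$. So the expected number of further generations to leave level~$i$, given $k$ islands have already reached it, is $O(1/(ks_i))$. Hence the total expected time spent on level~$i$ is bounded by \emph{spreading time to reach $k$ islands} $+\,O(1/(ks_i))$ for any $k\le\mu$, and I optimise $k$ for each topology.

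The topology enters only through the spreading time. The key lemmas to prove (one per topology) give an upper bound of $O(k/\pinform)$ for the ring, $O(\sqrt{k}/\pinform)$ for the $\sqrt{\mu}\times\sqrt{\mu}$ torus, and $O((\log k)/\pinform)$ for the hypercube and the complete graph, where $k$ is the number of islands at the current best level so far. Balancing with the improvement term $1/(ks_i)$ yields optimal choices $k_i^\star=\sqrt{\pinform/s_i}$ (ring), $(\pinform/s_i)^{2/3}$ (torus) and $1/s_i$ (hypercube, complete). Whenever $k_i^\star\le\mu$ the spreading term dominates and produces the first summand of each bound in the theorem; whenever $k_i^\star>\mu$, spreading is capped at $\mu$ and the improvement term $1/(\mu s_i)$ dominates, producing the uniform tail $\tfrac{1}{\mu}\sum_{i=1}^{m-1}1/s_i$. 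Summing over $i$ and taking the maximum rather than the sum (hidden in the $O$-notation) yields each of the four bounds.

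The principal obstacle will be the stochastic spreading analysis. Because each edge transmits only independently with probability~$\pinform$, the set of informed islands is not a deterministic wavefront, and several fitness levels may coexist in the population simultaneously. The standard remedy, which I would use, is to lower-bound the spreading process by a simpler dominated process: a biased random walk of the two endpoints of the informed arc on the ring; two orthogonal independent one-dimensional fronts on the torus; a doubling argument along disjoint shortest paths on the hypercube; and a coupon-collector-like analysis for $K_\mu$. Concentration bounds (Chernoff for the hypercube and complete graph, Chebyshev or drift on the ring and torus) then convert these into the spreading-time estimates used above, after which the fitness-level summation produces the theorem without further difficulty.
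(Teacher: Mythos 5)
Your overall architecture is exactly the paper's: Theorem~\ref{the:generalBounds} is indeed just a compilation of Theorems~\ref{the:method-ring}, \ref{the:method-torus}, \ref{the:method-hypercube}, \ref{the:method-completegraph} and~\ref{the:method-completegraph-refined}, and each of those is proved by the decomposition you describe --- expected time on level~$i$ is at most (time to inform $k$ islands) $+\;O(1/(k s_i))$ (the paper's Lemma~\ref{lem:fitness-level-k}, via the inequality $1/(1-(1-s_i)^k)\le 1+1/(ks_i)$), with the spreading estimates $O(k/\pinform)$ for strongly connected/ring, $O(\sqrt{k}/\pinform)$ for the torus and $O(\log k/\pinform)$ for the hypercube, the same choices of $k_i^\star$, and the cap at $\mu$ producing the common tail $\frac{1}{\mu}\sum_i 1/s_i$. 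Parts 1--3 are therefore fine in outline.

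Part 4 has a genuine gap. Your spreading estimate $O((\log k)/\pinform)$ for $K_\mu$, together with $k^\star=\min\{1/s_i,\mu\}$, yields a per-level term of order $(\log\mu)/\pinform$, hence $O(m\log(\mu)/\pinform)$ overall --- which matches \emph{neither} of the two stated bounds. For the first bound $\Oh{m/\pinform}+\frac{1}{\mu}\sum_i 1/s_i$ you must avoid the logarithm entirely: on $K_\mu$ a single informed island informs every other island independently with probability $\ge\pinform$ per generation, so the expected number of newly informed islands is at least $\pinform\mu/2$ while fewer than $\mu/2$ are informed, and additive drift gives $k=\mu/2$ informed islands after only $O(1/\pinform)$ expected generations; a coupon-collector argument (waiting for \emph{all} islands) is exactly what introduces the spurious $\log\mu$. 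For the second bound you need the denominator $\min\{\pinform\mu,1\}$ rather than $\pinform$: when $i$ islands are informed, each uninformed island is informed with probability $1-(1-\pinform)^i\ge i\pinform/(1+i\pinform)$, so the informed set grows \emph{multiplicatively} at rate $\Theta(\min\{\pinform\mu,1\})$, giving takeover time $O(\log(\mu)/\min\{\pinform\mu,1\})$ (the paper's Lemma~\ref{lem:propagation-complete}, using a median argument for the doubling phase and variable drift for the final phase). As written, your $K_\mu$ analysis is off by a factor of $\log\mu$ in one regime and by a factor of $\mu$ in the other ($\pinform\mu\le 1$), so part~4 does not follow from your outline without these two additional arguments.
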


A remarkable feature of our method is that it can automatically transfer upper bounds for panmictic EAs to parallel versions thereof. The only requirement is that bounds on panmictic EAs have been derived using the fitness-level method, and that the partition $A_1, \dots, A_m$ and the probabilities for improvements $s_1, \dots, s_{m-1}$ used therein are known. Then the expected parallel time of the corresponding island model can be estimated for all mentioned topologies simply by plugging the $s_i$ into Theorem~\ref{the:generalBounds}. Fortunately, many published runtime analyses use the fitness-level method---either explicitly or implicitly---and the mentioned details are often stated or easy to derive. Hence even researchers with limited expertise in runtime analysis can easily reuse previous analyses to study parallel EAs.

Further note that we can easily determine which choice of $\mu$, the number of islands, will give an upper bound of order~$1/\mu \cdot \sum_{i=1}^{m-1} 1/s_i$---the best upper bound we can hope for, using the fitness-level method. In all bounds from Theorem~\ref{the:generalBounds} we have a first term that varies with the topology and $\pinform$, and a second term that is always $1/\mu \cdot \sum_{i=1}^{m-1} 1/s_i$. The first term reflects how quickly information about good fitness levels is spread throughout the island model. Choosing $\mu$ such that the second term becomes asymptotically as large as the first one, or larger, we get an upper bound of $\Oh{1/\mu \cdot \sum_{i=1}^{m-1} 1/s_i}$. For settings where $\sum_{i=1}^{m-1} 1/s_i$ is an asymptotically tight upper bound for a single island, this corresponds to an asymptotic linear speedup. The maximum feasible value for $\mu$ depends on the problem, the topology and the transmission probability~$\pinform$.

\begin{table}[hp]
\centering
\begin{tabular}{|p{\baselineskip}l@{\;}|@{\;}l@{\;}|@{\;}l@{\;}|@{\;}l@{\;}|@{\;}l@{\;}|@{\;}l@{\;}|}\hline
&                     & \EA               & Ring            & Grid/Torus            & Hypercube   & Complete\\\hline
\multirow{4}{\baselineskip}{\begin{sideways}\centering{}$\OM$\hspace*{0.0cm}\end{sideways}}
&best $\mu$          &                   & $\mu=\Theta(\log n)$               & $\mu=\Theta(\log n)$    & $\mu=\Theta(\log n)$ & $\mu=\Theta( \log n)$\\
&$\E{\paralleltime}$ & $\Theta(n\log n)$ & $\Oh{n}$                  & $\Oh{n}$       & $\Oh{n}$            & $\Oh{n}$\\
&$\E{\sequentialtime}$ & $\Theta(n\log n)$ & $\Oh{n \log n}$                   & $\Oh{n \log n}$       & $\Oh{n \log n}$            & $\Oh{n \log n}$\\
&$\E{\communication}$ & $0$ & $\Oh{ n \log n}$                   & $\Oh{ n \log n}$       & $\Oh{ n (\log n) \log \log n}$            & $\Oh{ n \log^2 n}$\\\hline
\multirow{4}{\baselineskip}{\begin{sideways}\centering{}$\LO$\hspace*{0.0cm}\end{sideways}}
&best $\mu$          &                   & $\mu=\Theta(n^{1/2})$                & $\mu=\Theta(n^{2/3})$        & $\mu=\Theta\left( \frac{n}{\log n}\right)$ & $\mu=\Theta( n)$\\
&$\E{\paralleltime}$ & $\Theta(n^2)$     &  $\Oh{n^{3/2}}$             & $\Oh{n^{4/3}}$      & $\Oh{n \log n}$       & $\Oh{n}$\\
&$\E{\sequentialtime}$ & $\Theta(n^2)$     &  $\Oh{n^2}$             & $\Oh{n^2}$      & $\Oh{n^2}$       & $\Oh{n^2}$\\
&$\E{\communication}$ & $0$     &  $\Oh{ n^2}$             & $\Oh{ n^2}$      & {$\Oh{ n^2 \log^2 n}$}       & $\Oh{ n^3}$\\\hline
\multirow{4}{\baselineskip}{\begin{sideways}\centering{}unimodal\hspace*{0.0cm}\end{sideways}}
&best $\mu$          &                   & $\mu=\Theta(n^{1/2})$            & $\mu=\Theta(n^{2/3})$        & $\mu=\Theta\left( \frac{n}{\log n}\right)$ & $\mu=\Theta( n)$\\
&$\E{\paralleltime}$ & $O(dn)$     &  $\Oh{d n^{1/2}}$             & $\Oh{dn^{1/3}}$     & $\Oh{d \log n}$        & $\Oh{d}$\\
&$\E{\sequentialtime}$ & $O(dn)$     &  $\Oh{dn}$             & $\Oh{dn}$     & $\Oh{dn}$        & $\Oh{dn}$\\
&$\E{\communication}$ & $0$     &  $\Oh{ dn}$             & $\Oh{ dn}$     & {$\Oh{ dn\log n}$}        & $\Oh{ dn^2}$\\\hline
\multirow{4}{\baselineskip}{\begin{sideways}\centering{}$\Jump_k$\hspace*{0.0cm}\end{sideways}}
&best $\mu$          &                   & $\mu=\Theta(n^{k/2})$               & $\mu=\Theta(n^{2k/3})$     & $\mu=\Theta( n^{k-1})$ & $\mu=\Theta( n^{k-1})$\\
&$\E{\paralleltime}$ & $\Theta(n^k)$     & $\Oh{n^{k/2}}$             & $\Oh{n^{k/3}}$    & $\Oh{n}$       & $\Oh{n}$\\
&$\E{\sequentialtime}$ & $\Theta(n^k)$     & $\Oh{n^k}$             & $\Oh{n^k}$    & $\Oh{n^k}$       & $\Oh{n^k}$\\
&$\E{\communication}$ & $0$     & $\Oh{ n^k}$             & $\Oh{ n^k}$    & {$\Oh{ kn^k \log n}$}       & $\Oh{ n^{2k-1}}$\\
\hline
\end{tabular}\smallskip
\caption{Asymptotic bounds on expected parallel ($\paralleltime$, number of generations) and sequential ($\sequentialtime$, number of function evaluations) running times and expected communication efforts ($\communication$, total number of migrated individuals) for various $n$-bit functions and island models with $\mu$ islands running the \EA and using migration probability~$\pinform=1$. The number of islands $\mu$ was always chosen to give the best possible upper bound on the parallel running time, while not increasing the upper bound on the sequential running time by more than a constant factor. For unimodal functions $d+1$ denotes the number of function values. See~\cite{Droste2002} for bounds for the \EA. Results for $\Jump_k$ were restricted to $3 \le k = O(n/\log n)$ for simplicity. All upper bounds for \OM and \LO stated here are asymptotically tight, as follows from general results in~\cite{Sudholt2012c}.}
\label{tab:running-times-best-mu-p=1}
\end{table}

We give simple examples that demonstrate how our method can be applied. Our examples are from pseudo-Boolean optimization, but the method works in any setting where the fitness-level method is applicable. The simple \EA is used on each island (see Section~\ref{sec:Preliminaries} for details).
Table~\ref{tab:running-times-best-mu-p=1} summarizes the resulting running time bounds for the considered algorithms and problem classes. For simplicity we assume $\pinform=1$; a more detailed table for general transmission probabilities is presented in the appendix, see Table~\ref{tab:running-times-full-throttle}. The number of islands $\mu$ was chosen as explained above: to give the smallest possible parallel running time, while not increasing the sequential time, asymptotically. The table also shows the expected \emph{communication effort}, defined as the total number of individuals migrated throughout the run. This quantity is proportional to the parallel expected running time, with a factor depending on the number of islands and the topology. Details are given in Theorems~\ref{the:method-ring},~\ref{the:method-torus},~\ref{the:method-hypercube},~\ref{the:method-completegraph}, and~\ref{the:method-completegraph-refined}.
The functions used in this table are explained in Section~\ref{sec:Preliminaries}. Table~\ref{tab:running-times-full-throttle} in the appendix shows all our results for a variable number of islands~$\mu$ and variable transmission probabilities~$\pinform$.

The method has already found a number of applications and it spawned a number of follow-up papers. After the preliminary version of this work~\cite{Lassig2010a} was presented, the authors applied it for various problems from combinatorial optimization: the sorting problem (as maximizing sortedness), finding shortest paths in graphs, and Eulerian cycles~\cite{Lassig2011a}. Very recently, Mambrini, Sudholt, and Yao~\cite{Mambrini2012} also used it for studying how quickly island models find good approximations for the NP-hard \textsc{SetCover} problem. This work has also led to the discovery of simple adaptive schemes for changing the number of islands dynamically throughout the run, see L{\"a}ssig and Sudholt~\cite{Lassig2011}. These schemes lead to near-optimal parallel running times, while asymptotically not increasing the sequential running time on many examples~\cite{Lassig2011}. These schemes are tailored towards island models with complete topologies, which includes offspring populations as special case. The study of offspring populations in comma strategies is another recent development that was inspired by this work~\cite{Rowe2012}.

\section{Preliminaries}
\label{sec:Preliminaries}

In our example applications we consider the maximization of a pseudo-Boolean function $f \colon \{0, 1\}^n \to \R$. It is easy to adapt the method for minimization. The number of bits is always denoted by~$n$.
The following well known example functions have been chosen because they exhibit different probabilities for finding improvements in a typical run of an EA.
For a search point $x \in \{0, 1\}^n$ write $x = x_1 \dots x_n$, then $\OM(x) := \sum_{i=1}^n x_i$ counts the number of ones in $x$ and $\LO(x) := \sum_{i=1}^n \prod_{j=1}^i x_i$ counts the number of leading ones in $x$, i.\,e., the length of the longest prefix containing only 1-bits. A function is called \emph{unimodal} if every non-optimal search point has a Hamming neighbor (\ie, a point with Hamming distance 1 to it) with strictly larger fitness. Observe that \LO{} is unimodal as flipping the first 0-bit results in a fitness increase. For \LO{} every non-optimal point has exactly one Hamming neighbor with a better fitness.
For $1 \le k \le n$ we also consider
\[
\Jump_k := \begin{cases}
k + \sum_{i=1}^n x_i, & \textrm{if $\sum_{i=1}^n x_i \le n-k$ or $x = 1^n$,}\\
\sum_{i=1}^n (1-x_i) & \textrm{otherwise}.
\end{cases}
\]
This function has been introduced by Droste, Jansen, and Wegener~\cite{Droste2002} as a function with tunable difficulty as evolutionary algorithms typically have to perform a jump to overcome a gap by flipping $k$ specific bits. It is also interesting because it is one of very few examples where crossover has been proven to be essential~\cite{Jansen2002,Koetzing2011a}.

We are interested in the following performance measures. First we define the \emph{parallel running time} $\paralleltime$ as the number of generations until the first global optimum is evaluated. The \emph{sequential running time} $\sequentialtime$ is defined as the number of function evaluations until the first global optimum is evaluated. It thus captures the overall effort across all processors. In both measures we allow ourselves to neglect the cost of the initialization as this only adds a fixed term to the running times.

The \emph{speedup} is defined as the ratio of the expected running time of a single island and the expected running time of a parallel EA with $\mu$ islands. This corresponds to the notion of a \emph{weak orthodox speedup} in Alba's taxonomy~\cite{Alba2002}. If the speedup is at least of order $\mu$, i.\,e., if it is $\Omega(\mu)$, we speak of a \emph{linear speedup}. In this work it is generally understood in an asymptotic sense, unless we call it a \emph{perfect linear speedup}.

We also define the \emph{communication effort} $\communication$ as the total number of individuals migrated to other islands during the course of a run. Depending on the parallel architecture, communication between processors can be expensive in terms of time and bandwidth used. Therefore, this measure can be an important factor for determining the performance of a parallel EA.

Our method for proving upper bounds is based on the fitness-level method~\cite{Wegener2002,Droste2002}.
The idea is to partition the search space into sets $A_1, \dots, A_m$ called \emph{fitness levels} that are ordered with respect to fitness values. We say that an algorithm is in $A_i$ or on level~$i$ if the current best individual in the population is in $A_i$. An evolutionary algorithm where the best fitness value in the population can never decrease (called an \emph{elitist} EA) can only improve the current fitness level. If one can derive lower bounds on the probability of leaving a specific fitness level towards higher levels, this yields an upper bound on the expected running time.
\begin{theorem}[Fitness-level method]
\label{the:fitness-level-method}
For two sets $A,B\subseteq \{0,1\}^n$ and a fitness function $f$ let $A <_f B$ if $f(a)<f(b)$ for all $a\in A$ and all $b \in B$. Partition the search space into non-empty sets $A_1, A_2, \dots, A_m$ such that
$
A_1<_f A_2 <_f \dots <_f A_m
$
and $A_m$ only contains global optima.
For an elitist EA let $s_i$ be a lower bound on the probability of creating a new offspring in $A_{i+1}\cup \cdots \cup A_m$, provided the population contains a search point in $A_i$. Then the expected number of iterations of the algorithm to find the optimum is bounded by
\[
\sum_{i=1}^{m-1} \frac{1}{s_i}\;.
\]
\end{theorem}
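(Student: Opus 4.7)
The plan is to exploit elitism to decompose the total running time into a sum of waiting times, one per fitness level, and then bound each waiting time by a geometric random variable. Since the best fitness value never decreases in an elitist EA, once the algorithm's current best search point lies in some $A_j$, it can never return to a strictly lower level $A_i$ with $i<j$. Hence the sequence of best-level indices is non-decreasing in time, and the algorithm traverses at most a subset of the levels $1,2,\dots,m$ in strictly increasing order before eventually reaching $A_m$.

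Next I would define, for each $i$ with $1 \le i \le m-1$, the random variable $T_i$ as the number of iterations during which the best-so-far search point lies in $A_i$ (setting $T_i = 0$ if the algorithm never enters $A_i$). By the elitism argument above, the total number of iterations $T$ until an element of $A_m$ is produced satisfies $T \le \sum_{i=1}^{m-1} T_i$. Conditional on the algorithm being on level $i$, each iteration has probability at least $s_i$ of producing an offspring in $A_{i+1}\cup\cdots\cup A_m$, which (again by elitism) causes the algorithm to leave level $i$ forever. Therefore $T_i$ is stochastically dominated by a geometric random variable with success probability $s_i$, giving $\E{T_i} \le 1/s_i$, and this bound also holds trivially when $T_i = 0$.

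Finally I would apply linearity of expectation to conclude
\[
\E{T} \;\le\; \sum_{i=1}^{m-1} \E{T_i} \;\le\; \sum_{i=1}^{m-1} \frac{1}{s_i},
\]
which is the claimed bound. The only subtle point, and the one most likely to trip up a careful reader, is the stochastic domination step: the probability $s_i$ is a lower bound valid in every iteration conditional on being on level $i$, regardless of the past history of the population, so the waiting time to leave level $i$ is genuinely dominated by $\mathrm{Geom}(s_i)$ even though the distribution of the population within $A_i$ may evolve in a complicated way. Everything else is a routine consequence of elitism and linearity of expectation.
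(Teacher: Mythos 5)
Your proof is correct and is the standard argument for the fitness-level method; the paper itself states this theorem as a known result (citing Wegener and Droste et al.) without reproving it, but your decomposition into per-level waiting times $T_i$ with $\E{T_i}\le 1/s_i$ via geometric domination is exactly the skeleton the paper reuses in its proof of the parallel version (Theorem~\ref{the:fitness-levels-for-parallel-EAs}), where $\E{T_i}$ is bounded by summing the tail probabilities $\Prob{T_i\ge t}\le(1-s_i)^{t-1}$. Your explicit attention to why the domination by $\mathrm{Geom}(s_i)$ holds uniformly over the history is the right point to flag, and nothing is missing.
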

The fitness-level method has also been applied to other elitist optimization methods, including elitist ant colony optimizers~\cite{Gutjahr2008a,Neumann2009} and a binary particle swarm optimizer~\cite{Sudholtsubmitteda}. It gives rise to powerful tail inequalities~\cite{Zhou2012} and it can be used to prove lower bounds as well, when combined with additional knowledge on transition probabilities~\cite{Sudholt2012c}. Finally, Lehre~\cite{Lehre2011} recently showed that the fitness-level method can be extended towards non-elitist EAs with additional mild conditions on transition probabilities and the population size.

In the following we apply the fitness-level method to parallel EAs.
For the considered EAs we assume that there is a \emph{migration topology}, given by a directed graph. Islands represent vertices of the topology and directed edges indicate neighborhoods between the islands.
We often describe undirected graphs for use as migration topology, understanding that for an undirected edge $\{u, v\}$ we have two directed edges $(u, v)$ and $(v, u)$. In other words, though formally the migration topology is a directed graph, we often use the language of undirected graphs to describe it.

Our methods for proving upper bounds require that the islands run elitist evolutionary algorithms. All islands create new offspring independently by mutation and/or recombination among individuals in the island.
In every generation there is a chance that migration will send an individual on the current best fitness level to some target island, and that this individual will be included on the target island. This would effectively increase the fitness level of the target island to the current best level (or an even better one). For every pair of connected islands, we call this probability \emph{transmission probability} and denote it~$\pinform$. Note that for any pair of islands, the mentioned transmission events are independent.

The transmission probability can model various settings, where randomness and stochasticity may be involved:
\begin{itemize}
\item migrations do not take place in every generation, but only probabilistically with probability~$\pinform$,
\item islands do not automatically select individuals on the best fitness level for emigration, but there is a probability of at least~$\pinform$ that this happens,
\item similarly, islands do not automatically include immigrants on higher fitness levels, but only with probability at least~$\pinform$,
\item during migration crossover is performed, and $\pinform$ is a lower bound on the probability that crossover does not disrupt the fitness of an individual on a current best fitness level (if a crossover probability $p_c$ is used, then clearly $\pinform \ge 1-p_c$),
\item the physical architecture suffers from transient faults and $\pinform$ is a lower bound on the probability that migration is executed correctly.
\end{itemize}
Of course, the transmission probability can also model any combination of the above, in which case the product of all above probabilities gives a lower bound on the transmission probability.

Most of our results also apply when instead of probabilistic migration a fixed migration interval~$\migint$ is used. This is similar to a migration probability~$\pinform=1/\migint$; in fact, it can be regarded as a derandomized or quasi-random version of probabilistic migration. With a fixed migration interval the variance in the information propagation is reduced, and all islands operate in synchronicity. Probabilistic migrations are asynchronous; this simplifies the analysis as we do not need to keep track on how much time has passed since the last migration. We expect our results for probabilistic migration to transfer to the study of migration intervals. The only notable exception is the case of a complete topology, when the migration probability is rather small (Theorem~\ref{the:method-completegraph-refined}) as there synchronous and asynchronous migrations lead to different effects.

As elaborated above, our method is robust and it applies in various settings, and for various types of EAs simulated on the islands.
In our applications for illustrating concrete speedups for test problems, we use a simple \EA for all islands. The \EA maintains a single current search point, and in each generation it creates an offspring by mutation. The offspring replaces its parent if its fitness is not worse. The resulting island model is shown in Algorithm~\ref{alg:Evolutionary}.

\begin{algorithm}[h]
    \caption{Parallel \EA with $\mu$ islands and migration probability $\pinform$}
    \begin{tabbing}
     \qquad \= \small \qquad \=  \qquad \= \kill
     \textbf{For all} $1 \le i \le \mu$ choose $x^i \in \{0, 1\}^n$ uniformly at random.\\
     \REPEAT\\
    \> \textbf{For all} $1 \le i \le \mu$ \textbf{do in parallel}\\
    \> \>  Create $y^i$ by flipping each bit in $x^i$ with probability $1/n$.\\
    \> \>  \IF $f(y^i) \ge f(x^i)$ \THEN $x^i := y^i$.\\
    \> \>  Send a copy of $x^i$ to each neighboring island, independently with prob.~$\pinform$.\\
    \> \>  Choose $z^i$ with maximum fitness among all incoming migrants.\\
    \> \>  \IF $f(z^i) \ge f(x^i)$ \THEN $x^i := z^i$.
    \end{tabbing}
    \label{alg:Evolutionary}
\end{algorithm}

\section{Proving Upper Bounds for Parallel EAs}
\label{sec:general-upper-bounds}

\subsection{A General Upper Bound}

Now we describe how to prove upper bounds on the running time of parallel EAs.
In contrast to panmictic EAs, in an island model several islands might participate in the search for improvements from the current-best fitness level. The number of islands may vary over time according to the spread of information.

The following theorem transfers upper bounds for panmictic EAs derived by the fitness-level method into upper bounds for parallel EAs in a systematic way. \begin{theorem}[Fitness-level method for parallel EAs]
\label{the:fitness-levels-for-parallel-EAs}
Consider a partition of the search space into fitness levels $A_1 <_f A_2 <_f \dots <_f A_m$ such that $A_m$ only contains global optima.
Let $s_i$ be (a lower bound on) the probability that a fixed island running an elitist EA creates a new offspring in $A_{i+1}\cup \cdots \cup A_m$, provided the island contains a search point in $A_i$.
Let $\mu_t$ for $t \in \N$ denote (a lower bound on) the number of islands that have discovered an individual in $A_{i} \cup \cdots \cup A_m$ in the $t$-th generation after the first island has found such an individual.
Then the expected parallel running time of the parallel EA on~$f$ is bounded by
\[
\E{\paralleltime} \;\le\; \sum_{i=1}^{m-1} \sum_{t=0}^\infty (1-s_i)^{\sum_{j=1}^{t}\mu_j}\;.
\]
\end{theorem}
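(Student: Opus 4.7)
The plan is to reduce the parallel running time to a sum of level-escape times and then bound each expected escape time by a tail-sum estimate that exploits the independence of offspring creation across islands and generations. First I would decompose $\paralleltime$ into the waiting times spent on each fitness level. For $1\le i<m$, let $T_i$ denote the number of generations from the first moment at which some island contains a search point in $A_i\cup\cdots\cup A_m$ until the first moment at which some island contains a search point in $A_{i+1}\cup\cdots\cup A_m$. Because each island runs an elitist EA, the set of islands whose best search point lies in $A_i\cup\cdots\cup A_m$ is monotonically non-decreasing over time, so once this set is non-empty it stays non-empty. Consequently, the parallel running time satisfies $\paralleltime\le\sum_{i=1}^{m-1}T_i$ (the initial segment before any island reaches $A_1$ is empty, since $A_1\cup\cdots\cup A_m=\{0,1\}^n$). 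Linearity of expectation then gives $\E{\paralleltime}\le\sum_{i=1}^{m-1}\E{T_i}$.

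For a fixed level $i$, I would bound $\E{T_i}$ by the standard tail-sum identity $\E{T_i}=\sum_{t=0}^{\infty}\Prob{T_i>t}$ and estimate each term. During the time window in which $T_i$ is being measured, no island has reached $A_{i+1}\cup\cdots\cup A_m$ yet, so by elitism every island counted in $\mu_t$ has its current best search point exactly in $A_i$. Hence in the $j$-th generation after the first discovery of level~$i$, there are at least $\mu_j$ islands to which the hypothesis of the theorem applies, and each of them independently produces an offspring in $A_{i+1}\cup\cdots\cup A_m$ with probability at least $s_i$, since offspring creation is a local, per-island mutation (and possibly recombination) event. The probability that none of these $\mu_j$ islands succeeds in generation $j$ is therefore at most $(1-s_i)^{\mu_j}$. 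By the mutual independence of the mutation steps across distinct generations, the probability that no island escapes level $i$ during any of the first $t$ generations is at most
\[
\prod_{j=1}^{t}(1-s_i)^{\mu_j}\;=\;(1-s_i)^{\sum_{j=1}^{t}\mu_j},
\]
which is precisely $\Prob{T_i>t}$. Summing first over $t\ge 0$ and then over $i$ yields the claimed bound.

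\paragraph{Main obstacle.}
The only delicate point is the independence argument, because the numbers $\mu_j$ may themselves be random quantities driven by stochastic migration, and one must be careful to argue that the per-island success events really are independent of the history relevant to $\mu_j$. I would handle this by conditioning on the entire filtration up to the start of generation $j$ (which determines $\mu_j$ and which islands are on level~$i$) and noting that each such island's offspring-creation step is an independent mutation drawn after this conditioning; the factor $(1-s_i)^{\mu_j}$ is then a valid conditional upper bound, and since $\mu_j$ enters the final bound only through an upper bound on the conditional failure probability, the unconditional bound follows. A small additional care is needed to pass from the actual (possibly larger) number $M_j\ge\mu_j$ of level-$\ge i$ islands to $\mu_j$ itself; this is monotone since $(1-s_i)^{M_j}\le(1-s_i)^{\mu_j}$, so replacing $M_j$ by its lower bound only weakens the estimate in the right direction.
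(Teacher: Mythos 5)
Your proposal is correct and follows essentially the same route as the paper's proof: decompose the parallel time into per-level waiting times $T_i$, apply the tail-sum formula $\E{T_i}=\sum_{t\ge 0}\Prob{T_i> t}$, and bound each tail probability by $(1-s_i)^{\sum_{j=1}^{t}\mu_j}$ using the independence of the islands' offspring-creation steps. Your additional remark about conditioning on the filtration to justify the per-generation factor when $\mu_j$ is stochastic is a welcome point of rigor that the paper leaves implicit, but it does not change the argument.
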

\begin{proof}
Let $T_i$ denote the random time until the first island finds an individual on a fitness level $i+1, \dots, m$, starting with at least one individual on fitness level~$i$ in the whole population.
The expected parallel running time can be written as
\[
E(\paralleltime) = \sum_{i=1}^{m-1} \E{T_i}
                    = \sum_{i=1}^{m-1} \sum_{t=1}^\infty \Prob{T_i \geq t}\\
                   =  \sum_{i=1}^{m-1} \sum_{t=0}^\infty \Prob{T_i \ge t+1}.
\]
A necessary condition for $T_i \ge t+1$ is that during all $t$ generations after the first individual has reached fitness level~$i$ all islands are unsuccessful in finding an improvement. In the $j$-th of these generations there are at least $\mu_j$ islands, each being successful with probability at least $s_i$. Using that the islands create new offspring independently, the probability of all islands being unsuccessful is at most $(1-s_i)^{\mu_j}$. Thus,
\[
\sum_{i=1}^{m-1} \sum_{t=0}^\infty \Prob{T_i \ge t+1}
                   \le \sum_{i=1}^{m-1} \sum_{t=0}^\infty \prod_{j=1}^{t} (1-s_i)^{\mu_j}\\
                   = \sum_{i=1}^{m-1} \sum_{t=0}^\infty (1-s_i)^{\sum_{j=1}^{t}\mu_j}\;.\qedhere
\]
\end{proof}

The upper bound from Theorem~\ref{the:fitness-levels-for-parallel-EAs} is very general as it does not restrict the communication among the islands in any way. These aspects are hidden in the definition of the variables $\mu_t$. When looking at one particular fitness level, say level~$i$, we also speak of islands being \emph{informed} if and only if they contain an individual on level~$i$. The variable $\mu_t$ then gives the number of informed islands $t$ generations after the first island has been informed.

The spread of information obviously depends on the migration topology, the migration interval, and the selection strategies used to choose migrants that are sent and how migrants are included in the population.
The basic method works for all choices of these design aspects. We elaborate on these aspects and then move on to more specific scenarios where we can obtain more concrete results.

\subsection{How to Deal with Migration Intervals}

With a migration interval of $\migint > 1$ the $\mu_t$-value remains fixed for periods of $\migint$ generations. For appropriate~$t$ then $\mu_{t} = \mu_{t+1} = \dots = \mu_{t+\migint-1}$.
As the $\mu_t$-values are non-decreasing with $t$, the sum of $\mu$-values is at least $\sum_{j=1}^t \mu_j \ge \migint \sum_{j=1}^{t/\migint} \mu_{(j-1)\migint+1}$.
This implies the following simplified upper bound.
\begin{corollary}
For a parallel EA with migration interval $\migint$ the bound from Theorem~\ref{the:fitness-levels-for-parallel-EAs} simplifies to
\[
   \E{\paralleltime} \le           \sum_{i=1}^{m-1} \sum_{t=0}^\infty (1-s_i)^{\migint \sum_{j=1}^{t/\migint}\mu_{(j-1)\migint+1}}\;.
\]
\end{corollary}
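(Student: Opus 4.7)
The plan is to specialize the general fitness-level bound from Theorem~\ref{the:fitness-levels-for-parallel-EAs} to the case of a deterministic migration interval $\migint$ by exploiting the block structure this imposes on the sequence $(\mu_t)_{t \ge 1}$. Two observations drive the argument. First, under elitism no informed island ever becomes uninformed, so the sequence $(\mu_t)$ is non-decreasing. Second, since information crosses island boundaries only during migration events, which occur once every $\migint$ generations, the sequence is constant on each block of $\migint$ consecutive indices; concretely, for every $k \ge 1$ we have $\mu_{(k-1)\migint+1} = \mu_{(k-1)\migint+2} = \dots = \mu_{k\migint}$.

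The main step is to derive a lower bound on the exponent $\sum_{j=1}^{t}\mu_j$ appearing in Theorem~\ref{the:fitness-levels-for-parallel-EAs}. I would partition the first $t$ indices into $\lfloor t/\migint \rfloor$ complete blocks of length $\migint$ and a (possibly empty) trailing incomplete block. By block-constancy, the $k$-th complete block contributes exactly $\migint \cdot \mu_{(k-1)\migint+1}$ to the partial sum, while the trailing block contributes a non-negative amount which I discard. This yields
\[
   \sum_{j=1}^{t}\mu_j \;\ge\; \migint \sum_{k=1}^{\lfloor t/\migint \rfloor} \mu_{(k-1)\migint+1}\;.
\]

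Since $1-s_i \in [0,1]$, the function $x \mapsto (1-s_i)^x$ is non-increasing, so replacing the exponent $\sum_{j=1}^{t}\mu_j$ by the smaller quantity above can only enlarge each summand of the double sum from Theorem~\ref{the:fitness-levels-for-parallel-EAs}. Adopting the notation $t/\migint$ in place of $\lfloor t/\migint \rfloor$, as used in the statement of the corollary, this gives exactly the claimed bound. There is no substantive obstacle here; the proof is a short bookkeeping argument resting on monotonicity and block-constancy of $(\mu_t)$, and the only care needed is to index the blocks so that block $k$ starts at generation $(k-1)\migint+1$, matching the form on the right-hand side of the corollary.
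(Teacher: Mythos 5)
Your proposal is correct and follows essentially the same route as the paper: both arguments rest on the block-constancy and monotonicity of $(\mu_t)$ to bound the exponent from below by $\migint \sum_{j=1}^{t/\migint}\mu_{(j-1)\migint+1}$, and then use that $x \mapsto (1-s_i)^x$ is non-increasing. The only cosmetic difference is that you make the handling of the trailing incomplete block and the floor explicit, whereas the paper leaves this implicit.
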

The values $\mu_{(j-1)\migint}$ can be estimated like the values $\mu_{j}$ in a setting with ${\migint = 1}$.
In order to keep the presentation simple, in the following applications we only consider the case that $\migint = 1$, \ie, migration happens in every generation. This reflects common principles used in fine-grained or cellular evolutionary algorithms.
The following considerations can always be combined with the above arguments to handle migration intervals larger than 1.

\subsection{Stochastic Communication and Finding Improvements}

In order to arrive at more concrete bounds on the parallel running time for common migration topologies, we need to understand how the number of informed islands grows on each fitness level, i.\,e., the growth curves underlying the $\mu_j$-variables. Note that these variables are random variables in all settings where we have a transmission probability less than~1. This means that getting a closed formula for the expected parallel running time is not easy. In Theorem~\ref{the:fitness-levels-for-parallel-EAs} we cannot simply replace the $\mu_j$-variables by their expectations as by Jensen's inequality this would yield an estimation in the wrong direction (i.\,e., it would give a lower bound where an upper bound is needed). More work is required in order to arrive at closed formulas for common topologies.

Instead of arguing with the random number of informed islands, it is easier to argue with expected hitting times for the time until a specified number of islands is informed. If we know such expected hitting times, or upper bounds thereof, we can estimate the time until the parallel EA finds a better fitness level.
\begin{lemma}
\label{lem:fitness-level-k}
Consider an island model running elitists EAs and fix some fitness level~$i$ with success probability~$s_i$ for each island. Let $\Tinform{k}$ denote the random number of generations until at least $k$ islands are informed.
Then for every $k \le \mu$ the expected time until this fitness level is left towards a better one is at most
\[
\E{\Tinform{k}} + 1 + \frac{1}{k} \cdot \frac{1}{s_i}.
\]
\end{lemma}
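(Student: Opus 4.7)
I would decompose the waiting time $T$ for level~$i$ to be left towards a strictly better level into two phases by introducing the stopping time $\tau := \min(T,\Tinform{k})$. Up to time $\tau$ either the level has already been left or $\Tinform{k}$ generations have elapsed, so trivially $\E{\tau} \le \E{\Tinform{k}}$. For the remaining time $T - \tau$ I would condition on the event $\{\tau = \Tinform{k}\}$: by elitism of each island, every island that has ever become informed (i.e., has held a search point in $A_i \cup \dots \cup A_m$) remains so forever, so at time $\Tinform{k}$ there are at least $k$ informed islands; since on this event $T > \Tinform{k}$, none of them has yet reached a level strictly above~$i$, so each is currently on level exactly~$i$ and each therefore produces, independently, an offspring on a strictly better level with probability at least $s_i$ in every subsequent generation.

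Hence from time $\Tinform{k}$ onwards the per-generation probability of leaving level~$i$ is at least $1 - (1 - s_i)^k$. To convert this into a clean closed-form bound I would invoke Bernoulli's inequality in the form $(1 - s_i)^{-k} \ge 1 + k s_i$, valid for $k \ge 1$ and $s_i \in [0,1)$, which rearranges to $(1 - s_i)^k \le 1/(1 + k s_i)$ and hence
\[
1 - (1 - s_i)^k \;\ge\; \frac{k\, s_i}{1 + k\, s_i}\,.
\]
The expected geometric waiting time of phase~2 is therefore at most $(1 + k s_i)/(k s_i) = 1 + 1/(k s_i)$. Combining the phases via the strong Markov property gives
\[
\E{T} \;\le\; \E{\tau} + \E{T - \tau} \;\le\; \E{\Tinform{k}} + 1 + \frac{1}{k\, s_i}\,,
\]
which is exactly the claimed bound; the degenerate case $s_i = 1$ is trivial.

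The main obstacle, though a minor one, is the phase-2 bookkeeping: one must use elitism to argue that the set of informed islands never shrinks, so that the per-generation success bound $1 - (1-s_i)^k$ continues to hold throughout phase~2, and one must observe that on $\{T > \Tinform{k}\}$ the informed islands at time $\Tinform{k}$ all sit on level exactly~$i$---once any of them moves to a higher level, phase~2 ends and $T$ has occurred. The remaining ingredient is the Bernoulli-style estimate, which is purely calculational.
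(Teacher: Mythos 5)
Your proof is correct and follows essentially the same route as the paper's: wait until $k$ islands are informed, then bound the remaining time by the geometric waiting time $1/(1-(1-s_i)^k) \le 1 + 1/(k s_i)$. The only differences are presentational: you derive the key estimate $(1-s_i)^k \le 1/(1+k s_i)$ directly from Bernoulli's inequality where the paper cites it as Rowe's lemma (Lemma~\ref{lemma:parallel-bounds} in the appendix), and your stopping-time decomposition of $T$ is a slightly more careful formalization of the paper's informal two-phase argument.
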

\begin{proof}
After $\E{\Tinform{k}}$ expected generations there are at least $k$ informed islands. Then the probability of leaving the fitness level is at least
$1-(1-s_i)^k$ and the expected time is bounded by
\begin{equation}
\label{eq:Rowe}
\frac{1}{1-(1-s_i)^k} \le 1 + \frac{1}{k} \cdot \frac{1}{s_i},
\end{equation}
where the inequality is due to Jon Rowe~\cite[Lemma~3]{Rowe2012}, stated as Lemma~\ref{lemma:parallel-bounds} in the appendix. Together, this proves the claim.
\end{proof}
A good choice for $k$ is one where $\E{\Tinform{k}} \approx \frac{1}{k} \cdot \frac{1}{s_i}$ as this is likely to minimize the bound from Lemma~\ref{lem:fitness-level-k}, at least asymptotically.

The lemma ignores the fact that during the first $\Tinform(k)$ generations islands can already find improvements. It also ignores that the number of islands might grow beyond~$k$ after this time. However, we will see that for appropriate choices of~$k$, the lemma still gives near-optimal results. In the first generations the number of islands is likely to be too small anyway to yield a significant benefit. In addition, after $k$ islands have been informed this number is large enough to guarantee that improvements are found quickly, for appropriate~$k$.

\subsection{Information Propagation in Networks}

It remains to estimate the first hitting time for informing a certain number of vertices. Note that this is similar to studying growth curves and takeover times. In fact, $\xi(\mu)$ is the expected time until the whole island model is informed. Growth curves and takeover times have been studied in artificial settings where no variation takes place, see~\cite{Rudolph2000a,Rudolph2006,Sarma1997,Alba2004,Giacobini2003,Giacobini2003a,Giacobini2005a,Giacobini2005}
or recent surveys~\cite[Chapter~4]{Luque2011}, \cite{Sudholt2012a}.

In the following, we refer to our model of transmission probabilities as it is a general model that captures many stochastic components in the dynamic behavior of island models. But at the same time it is simple enough to allow for a theoretical analysis.

Transmission probabilities give rise to a stochastic information propagation process in networks. Each informed vertex in the network independently tries to inform all its neighbors in every iteration, and information is successfully transmitted across any of these edges with probability~$p$. This process was studied by Rowe, Mitavskiy, and Cannings~\cite{Rowe2008}, who considered the \emph{propagation time} as the time until all vertices in the network are informed.
They presented bounds for interesting graph classes as well as a general upper bound of
\[
\frac{8\diam(G) + 8\log n}{p(1-e^{-1})}
\]
for the propagation time on an undirected graph~$G$. Thereby $\diam(G)$ denotes the \emph{diameter} of~$G$, defined as the maximum number of edges on any shortest path between two vertices in the graph.

Interestingly, the same probabilistic process also underlies the way randomized search heuristics find shortest paths in weighted undirected graphs. Doerr, Happ, and Klein~\cite{Doerr2007e,Doerr2011d} showed that the \EA can find shortest paths in graphs by simulating the Bellman-Ford algorithm. The task is to find shortest paths from a source~$v^*$ to all other vertices. For vertices whose shortest paths have few edges, shortest paths are found quickly. In our language these vertices would be called \emph{informed}. If $u$ is informed and the graph contains an edge $\{u, v\}$, then $v$ can become informed with a fixed probability during a lucky mutation, if the shortest path from $v^*$ to~$v$ contains~$u$. This way, shortest paths propagate through the graph in the same fashion as information does. The same can be observed for ant colony optimizers~\cite{Sudholt2011a}.

Doerr, Happ, and Klein~\cite{Doerr2007e} independently used a different argument for bounding the expected propagation time. Fix a shortest path in the graph, leading from~$v^*$ to some fixed vertex~$v$. In every generation there is a chance of informing the first uninformed vertex on the path, until eventually the information reaches~$v$. If the path has at least $\log n$ edges, the time until~$v$ is informed is highly concentrated. Using tail bounds, the probability of significantly exceeding the expectation is very small. This allows us to apply a union bound for all considered vertices~$v$.

Following the proof of~\cite[Lemma~3]{Doerr2007e}, we get the following lemma. An advantage over the general bound from~\cite{Rowe2008} is that it not only bounds the propagation time for the whole network. It also bounds expected hitting times for informing smaller numbers of vertices.
\begin{lemma}
\label{lem:propagation-Doerr}
Consider propagation with transmission probability~$\pinform$ on any undirected graph where initially a single vertex~$v^*$ is informed. For $i \in \N_0$ let $V_i$ contain all vertices~$v$ whose shortest path from~$v^*$ to~$v$ contains $i$ edges. Let $s_k := \sum_{i=1}^k |V_i|$.
The probability of not having informed $s_k$ vertices in time $\lambda k/\pinform$, $\lambda \ge 2$,
is at most
\[
s_k \cdot \exp\left(-\frac{(\lambda-1)^2}{2\lambda} \cdot k\right)
\le
s_k \cdot \exp\left(-\frac{\lambda k}{8}\right).
\]
The expected time until $s_k$ vertices are informed is at most
\[
\frac{\frac{c}{c-1}  \cdot \max\left\{4k, 8\ln(c s_k)\right\}}{\pinform}
\]
for every $c > 1$.
\end{lemma}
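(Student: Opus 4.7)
The plan is to couple, for each target vertex $v$ at shortest-path distance $d_v \le k$ from $v^*$, the random time $T_v$ at which $v$ becomes informed with a sum of $d_v$ independent $\mathrm{Geom}(\pinform)$ waiting times along one \emph{fixed} shortest $v^*$--$v$ path $v^* = u_0, u_1, \dots, u_{d_v} = v$. Because in every generation every edge succeeds independently with probability $\pinform$, once $u_{j-1}$ is informed the time until $u_j$ is informed via the edge $\{u_{j-1}, u_j\}$ alone is $\mathrm{Geom}(\pinform)$, and these per-edge waiting times are mutually independent (disjoint edges, disjoint time intervals). Discarding all other possible propagation routes only weakens the bound, so $T_v$ is stochastically dominated by $\sum_{j=1}^{d_v} X_j \le \sum_{j=1}^{k} X_j$ with $X_j$ i.i.d.\ $\mathrm{Geom}(\pinform)$.

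To bound $\Prob{T_v > \lambda k/\pinform}$, I would use the duality between negative-binomial and binomial tails: $\sum_{j=1}^{k} X_j > \lambda k/\pinform$ is the event that $\lfloor\lambda k/\pinform\rfloor$ independent $\mathrm{Bernoulli}(\pinform)$ trials yield fewer than $k$ successes, a binomial with mean $\mu \ge \lambda k - 1$. A standard multiplicative Chernoff lower-tail bound with deviation $\delta = 1 - k/\mu$ together with the elementary monotonicity of $x\mapsto(x-k)^2/(2x)$ on $x \ge k$ yields
\[
\Prob{T_v > \lambda k/\pinform}\;\le\; \exp\!\left(-\frac{(\lambda-1)^2 k}{2\lambda}\right),
\]
and a union bound over the $s_k$ target vertices gives the first inequality of the lemma. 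The second inequality $(\lambda-1)^2/(2\lambda) \ge \lambda/8$ for $\lambda \ge 2$ is a one-line calculation: $(\lambda-1)^2/(2\lambda) - \lambda/8 = (\lambda-2)(3\lambda-2)/(8\lambda) \ge 0$.

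For the expectation bound I would use a restart argument. Choose $\lambda$ so that $\lambda k = \max\{4k,\, 8\ln(cs_k)\}$; then $\lambda \ge 4 \ge 2$, and in either case defining the maximum one has $\lambda k/8 \ge \ln(cs_k)$, so the first part yields $\Prob{T > T^*} \le 1/c$ for $T^* := \lambda k/\pinform$, where $T$ is the time until all $s_k$ target vertices are informed. Since the propagation process is monotone---adding informed vertices can only accelerate further informing---the residual time from any state reached at $T^*$ is stochastically dominated by~$T$ itself, whence
\[
\E{T}\;\le\; T^* + \Prob{T > T^*}\cdot \E{T}\;\le\; T^* + \tfrac{1}{c}\,\E{T},
\]
and rearranging gives $\E{T} \le \frac{c}{c-1}\,\max\{4k,\, 8\ln(cs_k)\}/\pinform$.

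The one subtlety is insisting on a single fixed shortest path per target vertex: the full propagation dynamics couple vertices in complicated ways, but restricting attention to one path restores the independence needed for Chernoff at no cost in the upper bound. Everything after that---the negative-binomial-to-binomial translation, the algebraic comparison, and the standard restart reduction from tail bound to expectation---is routine.
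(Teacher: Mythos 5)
Your proposal is correct and follows essentially the same route as the paper: the tail bound via a fixed shortest path per target vertex, domination by a sum of independent geometric waiting times, the negative-binomial-to-binomial translation with a Chernoff bound, and a union bound (this is exactly the argument of Doerr, Happ, and Klein's Lemma~3, which the paper cites for this step), followed by a restart/phase argument for the expectation. The only cosmetic difference is that you pick $\lambda$ via the maximum in one stroke where the paper splits into the two cases $8\ln(cs_k)/k \ge 2$ and $< 2$; the substance is identical.
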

\begin{proof}
The first claim follows from the proof of Lemma~3 in~\cite{Doerr2007e} and the fact that $(\lambda-1)^2/\lambda \ge \lambda/4$ for $\lambda \ge 2$.

If $8/k \cdot \ln(cs_k) \ge 2$ we use $\lambda := 8/k \cdot \ln(cs_k)$ and have that after $\lambda k/\pinform$ iterations the probability of not having informed all vertices is at most
\[
s_k \cdot \exp\left(-\ln(c s_k)\right) = \frac{1}{c}.
\]
If not, we repeat the argument with another phase of $\lambda k/\pinform$ iterations. As each phase is successful with probability at least $1-1/c$, the expected propagation time is at most
\[
\frac{1}{1-1/c} \cdot \frac{\lambda k}{\pinform} = \frac{\frac{c}{c-1} \cdot 8\ln(c s_k)}{\pinform}.
\]
If $8/k \cdot \ln(cs_k) < 2$ then $k/4 > \ln(c s_k)$. The first statement with $\lambda := 2$ then gives a probability bound of
\[
s_k \cdot \exp\left(-\frac{k}{4}\right) \le s_k \cdot \exp\left(-\ln(c s_k)\right) \le \frac{1}{c}
\]
and using the same arguments as before we get a time bound of
\[
\frac{1}{1-1/c} \cdot \frac{\lambda k}{\pinform} = \frac{\frac{c}{c-1} \cdot 2k}{\pinform}.
\]
\end{proof}
Note that putting $k := \diam(G)$ and $c=2$, we get a bound of
\[
\max\left\{\frac{4\diam(G)}{\pinform}, \frac{16\ln(2n)}{\pinform}\right\}
\le \frac{4\diam(G) + 11.2 \log(n) + 11.2}{\pinform}.
\]
For all non-empty graphs this is better than the general upper bound
\[
\frac{8\diam(G) + 8\log n}{\pinform(1-e^{-1})} \approx \frac{12.7\diam(G) + 12.7\log n}{\pinform}
\]
from~Rowe, Mitavskiy, and Cannings~\cite{Rowe2008}. However, the asymptotic behavior of both bounds is the same as $(x+y)/2 \le \max\{x, y\} \le x+y$ for all $x, y \in \R^+_0$, hence $\max\{x, y\} = \Theta(x + y)$.

Now we are prepared to analyze parallel EAs with concrete topologies.

\section{Parallel EAs with Ring Structures}
\label{sec:ring}

We start with ring graphs as they are often used as topologies~\cite{Tomassini2005}. Rings can either be unidirectional, in which case there is exactly one directed cycle, or bidirectional, when all edges are undirected. The following theorem holds for both kinds of graphs, and in fact for all strongly connected graphs. Recall that a directed graph is called \emph{strongly connected} if for every two vertices $u, v$ there is a directed from $u$ to~$v$ (implying that there is also a path from~$v$ to~$u$).
\begin{theorem}
\label{the:method-ring}
Consider an island model running elitists EAs on a function~$f$ with a fitness-level partition $A_1 <_f \dots <_f A_m$ and success probabilities $s_1, \dots, s_{m-1}$. Let $\pinform$ be (a lower bound on) the probability that a specific island on fitness level~$i$ informs a specific neighbor in the topology in one generation.
The expected parallel running time on~$f$ with an unidirectional or bidirectional ring and $\mu$ islands---or in fact any strongly connected topology---is bounded by
\[
\frac{2}{\pinform^{1/2}} \sum_{i=1}^{m-1} \frac{1}{s_i^{1/2}} + \frac{1}{\mu} \cdot \sum_{i=1}^{m-1} \frac{1}{s_i}\;.
\]
The expected communication effort for ring graphs is by a factor of at most $2\pcomm \mu$ larger than the expected parallel time.
\end{theorem}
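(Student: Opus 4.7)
The plan is to combine Lemma~\ref{lem:fitness-level-k} with a propagation bound in the spirit of Lemma~\ref{lem:propagation-Doerr}, balancing the two terms level by level. Fix a fitness level~$i$ with success probability~$s_i$. Starting from the moment the first island enters level~$i$ (call it $v^*$), I would trace the propagation of information along a directed spanning arborescence rooted at~$v^*$: in any strongly connected topology on $\mu$ vertices, for every $k \le \mu-1$ there is a directed path emanating from $v^*$ through $k$ distinct vertices of length at most~$k$. The proof of Lemma~\ref{lem:propagation-Doerr}, which just traces a fixed path and applies Chernoff-style tail bounds on the travel time, carries over unchanged to this directed setting and yields $\E{\Tinform{k}} \le c k/\pinform$ for an absolute constant~$c$, once the logarithmic correction in Lemma~\ref{lem:propagation-Doerr} is either absorbed into the linear term (for large~$k$) or handled by observing $\Tinform{1}=0$ (for $k=1$).

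The key tuning step is to choose $k_i := \min\{\mu,\lceil\sqrt{\pinform/s_i}\rceil\}$ per level so as to balance the two contributions $\E{\Tinform{k_i}}$ and $1/(k_i s_i)$ in Lemma~\ref{lem:fitness-level-k}. I would then split the sum over levels into two regimes. In the regime $\sqrt{\pinform/s_i} \le \mu$, the balanced per-level bound gives $\Oh{1/\sqrt{\pinform s_i}}$, summing to the first term $\frac{2}{\pinform^{1/2}} \sum_i \frac{1}{s_i^{1/2}}$ of the claim. In the regime $\sqrt{\pinform/s_i} > \mu$, we cap $k_i = \mu$; here $\pinform > \mu^2 s_i$ forces $\mu/\pinform < 1/(\mu s_i)$, so the per-level bound collapses to $\Oh{1/(\mu s_i)}$, summing to the second term $\frac{1}{\mu}\sum_i \frac{1}{s_i}$. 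Summing both regimes and using that at most one applies per level yields the claimed running-time bound.

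For the communication effort I would simply count: in each generation every island independently attempts to send migrants to each of its (at most~2) out-neighbors, each such attempt succeeding with probability exactly~$\pinform$ for the purposes of counting migrants sent. Hence the expected number of migrants created in one generation is at most $2\pinform\mu$. By Wald's identity applied to the parallel running time, the expected communication effort is at most $2\pinform\mu \cdot \E{\paralleltime}$, as required.

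The main obstacle is the directed strongly connected case: Lemma~\ref{lem:propagation-Doerr} is stated for undirected graphs, and in a general strongly connected topology the BFS expansion $|V_0 \cup \dots \cup V_k|$ can be as small as $k+1$ (no better than a path), unlike the bidirectional ring where one gains a factor of~$2$. I would verify explicitly that the fixed-path argument of~\cite{Doerr2007e} only uses the existence of a directed path from the source to each target vertex, so it transfers, and that the worst-case expansion $s_k \ge k+1$ suffices for the balancing above to go through with constants that absorb into the factor~$2$ in the theorem. Minor boundary cases ($k_i=1$, levels with $s_i$ extremely close to~$\pinform$, and the $+1$ constant in Lemma~\ref{lem:fitness-level-k}) would be handled by a standard case distinction at the end.
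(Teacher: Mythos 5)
Your overall architecture (Lemma~\ref{lem:fitness-level-k} plus a per-level balance $k_i \approx \sqrt{\pinform/s_i}$, capped at $\mu$, with the cap regime collapsing into the $\frac{1}{\mu}\sum_i 1/s_i$ term) is exactly the paper's, and your communication-effort count via Wald is fine. The genuine problem is the propagation bound. You route it through the Chernoff-based path argument of Lemma~\ref{lem:propagation-Doerr}, which yields at best $\E{\Tinform{k}} \le \frac{c}{c-1}\max\{4k, 8\ln(c s_k)\}/\pinform$; with the worst-case expansion $s_k \ge k+1$ available in a general strongly connected digraph, balancing $\frac{4k}{\pinform}$ against $\frac{1}{k s_i}$ gives a per-level bound of at least $\frac{4}{\sqrt{\pinform s_i}}$ (worse still for small~$k$, where the logarithmic branch dominates). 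Your claim that these constants ``absorb into the factor~2'' is false: this route proves only the $\Oh{\cdot}$ version of the statement (as in Theorem~\ref{the:generalBounds}), not the bound $\frac{2}{\pinform^{1/2}}\sum_i s_i^{-1/2}$ actually asserted. The paper instead uses a one-line argument that you are missing: as long as the informed set is a proper nonempty subset, strong connectivity guarantees at least one edge from an informed to an uninformed island, so a new island is informed with probability at least $\pinform$ per generation, whence $\E{\Tinform{k}} \le (k-1)/\pinform$ by summing $k-1$ geometric waiting times. With $k = \sqrt{\pinform/s_i}$ this gives precisely $\frac{k-1}{\pinform} + 1 + \frac{1}{k s_i} \le \frac{2}{\sqrt{\pinform s_i}}$. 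No concentration argument is needed here; the Doerr-style machinery only pays off for the torus, where one exploits quadratic ball growth.

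A secondary error: your claim that in any strongly connected topology there is ``a directed path emanating from $v^*$ through $k$ distinct vertices of length at most $k$'' is false as stated (consider a star with all spokes oriented both ways: it is strongly connected, yet no simple directed path from the center visits more than two vertices). What is true, and what the union-bound argument actually needs, is that the BFS layers from $v^*$ are nonempty until exhausted, so at least $\min\{k+1,\mu\}$ vertices lie within directed distance~$k$; you state this correctly later, but the argument should be phrased in terms of one shortest path per target vertex, not one long path. Even with that repair, the constant issue above remains, so you should replace the propagation step by the direct cut-edge/waiting-time argument.
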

The shape of this formula deserves some explanation. The second term $ \frac{1}{\mu} \cdot \sum_{i=1}^{m-1} \frac{1}{s_i}$ is by a factor of $\mu$ smaller than the upper bound for a single island by Theorem~\ref{the:fitness-level-method}. If the latter is asymptotically tight, the second term in Theorem~\ref{the:method-ring}, regarded in isolation, would give a perfect linear speedup. The first term is related to the speed at which information is propagated through the island model. Unlike for the second term, it is independent of~$\mu$, but it depends on the transmission probability~$\pinform$. We do have a linear speedup if the first term $\frac{2}{\pinform^{1/2}} \sum_{i=1}^{m-1} \frac{1}{s_i^{1/2}}$ asymptotically does not grow faster than the second term, again assuming that the bound for a single island is tight.

As $\mu$ grows, the second term becomes smaller, while the first term remains fixed. So if we have a linear speedup for small~$\mu$, there is a point where with growing~$\mu$ the linear speedup disappears. This threshold can be easily computed by checking which value of~$\mu$ gives rise to the first and second terms being of equal asymptotic order. As will be seen in the next sections, the same also holds for other migration topologies.
\begin{proof}[Proof of Theorem~\ref{the:method-ring}]
For the unidirectional ring we have $\E{\Tinform{k}} \le (k-1)/\pinform$ since a new island is informed with probability at least $\pinform$. As this happens independently in each generation, the expected waiting time until this happens is at most~$1/\pinform$.
In fact, this argument holds for all strongly connected topologies and in particular for the bidirectional ring.

Now, if $1 \le k := \pinform^{1/2}/s_i^{1/2} \le \mu$ (ignoring rounding issues), by Lemma~\ref{lem:fitness-level-k} the expected number of generations on fitness level~$i$ is bounded by
\[
\frac{k-1}{\pinform} + 1 + \frac{1}{k} \cdot \frac{1}{s_i} \le \frac{1}{\pinform^{1/2}s_i^{1/2}} + \frac{1}{\pinform^{1/2}s_i^{1/2}} = \frac{2}{\pinform^{1/2}s_i^{1/2}}.
\]
In case $\pinform^{1/2}/s_i^{1/2} < 1$ we trivially get an upper bound of
\[
\frac{1}{s_i} \le \frac{1}{\pinform^{1/2}s_i^{1/2}}.
\]
If $\pinform^{1/2}/s_i^{1/2} > \mu$, Lemma~\ref{lem:fitness-level-k} for $k := \mu$ gives an upper bound of
\[
\frac{\mu-1}{\pinform} + 1 + \frac{1}{\mu} \cdot \frac{1}{s_i} > \frac{1}{\pinform^{1/2}s_i^{1/2}} + \frac{1}{\mu} \cdot \frac{1}{s_i}.
\]
Taking the maximum of the above upper bounds gives
\[
\max\left(\frac{2}{\pinform^{1/2} s_i^{1/2}}, \frac{1}{\pinform^{1/2} s_i^{1/2}} + \frac{1}{\mu} \cdot \frac{1}{s_i}\right)
\le
\frac{2}{\pinform^{1/2} s_i^{1/2}} + \frac{1}{\mu} \cdot \sum_{i=1}^{m-1} \frac{1}{s_i}\;.
\]
Summing over all fitness levels proves the claim.
\end{proof}
As remarked in the proof, the bound from Theorem~\ref{the:method-ring} holds for arbitrary strongly connected topologies as the unidirectional ring is a worst case for the $\mu_t$-values.

For bidirectional rings we have $\E{\Tinform{k}} \le \frac{k}{2\pinform}$ (as can be seen from applying Johannsen's drift theorem, stated in the appendix as Theorem~\ref{drift:johannsen}, to the difference to $k$ informed vertices, using $h(1)=\pinform$ and $h(x) = 2\pinform$ for $x > 1$ as drift function). This decreases the constant~2 in the first term towards $\sqrt{2}$, at the expense of an additional term~$m-1$.

Also note that if $\pinform < s_i$ then the trivial bound $1/s_i$ gives a better estimate for the time until this fitness level is left. If this holds for all fitness levels, parallelization does not give any provable speedups as information is propagated too slowly.

Contrarily, if, say, $\pinform = \Omega(1)$, compared to a single island in a ring the expected waiting time for every fitness level can be replaced by its square root. This can yield significant speedups. We make this precise for concrete functions in the following theorem. For comparing these times with runtime bounds for the \EA we refer to Table~\ref{tab:running-times-best-mu-p=1}.
\begin{theorem}
\label{the:application-ring}
The following holds for the parallel \EA with transmission probability at least $\pinform$ on a unidirectional or bidirectional ring (or any other strongly connected topology):
\begin{itemize}
\item $\E{\paralleltime} = \mathord{O}\mathord{\left(\frac{n}{\pinform^{1/2}} + \frac{n \log n}{\mu}\right)}$ for \OM{},
\item $\E{\paralleltime} = \mathord{O}\mathord{\left(\frac{d n^{1/2}}{\pinform^{1/2}} + \frac{dn}{\mu}\right)}$ for every unimodal function with $d+1$ function values,
\item $\E{\paralleltime} = \mathord{O}\mathord{\left(\frac{n^{k/2}}{\pinform^{1/2}} + \frac{n^k}{\mu}\right)}$ for $\Jump_k$ with $k \ge 2$.
\end{itemize}
\end{theorem}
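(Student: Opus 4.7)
The plan is to derive all three bounds by a direct application of Theorem~\ref{the:method-ring}, which reduces the task to identifying a suitable fitness-based partition on each function and bounding $\sum_{i=1}^{m-1} 1/s_i$ and $\sum_{i=1}^{m-1} 1/s_i^{1/2}$. Since the partitions used are standard in the analysis of the \EA, the only real work is the two sums for each function.

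For \OM, I would use the canonical partition by the number of ones, so $m=n+1$ and $A_i=\{x:\OM(x)=i-1\}$. A mutation from a point with $i-1$ ones improves the fitness if it flips exactly one of the $n-i+1$ zero-bits and no other bit, giving $s_i\ge (n-i+1)/(en)$. Then $\sum_{i=1}^{m-1}1/s_i\le en\sum_{j=1}^n 1/j=O(n\log n)$ and, using $\sum_{j=1}^n 1/\sqrt{j}=\Theta(\sqrt n)$, $\sum_{i=1}^{m-1}1/s_i^{1/2}\le \sqrt{en}\sum_{j=1}^n 1/\sqrt{j}=O(n)$. Plugging into Theorem~\ref{the:method-ring} gives the claimed $O(n/\pinform^{1/2}+n\log n/\mu)$.

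For a unimodal function with $d+1$ distinct fitness values, I would partition the search space by fitness value, giving $m\le d+1$. Unimodality ensures that every non-optimal fitness level contains a Hamming neighbor of strictly larger fitness, so a standard bit mutation flipping exactly the right bit succeeds with $s_i\ge 1/(en)$. Both sums are then trivial: $\sum 1/s_i\le d\cdot en=O(dn)$ and $\sum 1/s_i^{1/2}\le d\cdot\sqrt{en}=O(d\sqrt n)$, giving the claimed bound via Theorem~\ref{the:method-ring}. For $\Jump_k$ with $k\ge 2$, I would use the standard partition: levels $A_0,\dots,A_{n-k}$ indexed by the number of ones up to $n-k$, plus the level $\{1^n\}$. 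On the OneMax-like levels the same $s_i\ge(n-i+1)/(en)$ applies, while leaving $A_{n-k}$ requires flipping exactly the $k$ remaining zero-bits and nothing else, so $s_{n-k+1}\ge \binom{n}{k}^{-1}\ge 1/n^k$ (up to constants, $1/(en^k)$ suffices). Therefore $\sum 1/s_i=O(n\log n)+O(n^k)=O(n^k)$ and $\sum 1/s_i^{1/2}=O(n)+O(n^{k/2})=O(n^{k/2})$, the last step using $k\ge 2$. Theorem~\ref{the:method-ring} then yields $O(n^{k/2}/\pinform^{1/2}+n^k/\mu)$.

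The main obstacle, such as it is, lies in the $\Jump_k$ case: one has to recognize that the final jump level dominates both sums and that the constraint $k\ge 2$ is precisely what is needed for $n^{k/2}$ to absorb the $O(n)$ contribution from the OneMax-like portion of the partition. The other two cases reduce to a routine evaluation of $\sum 1/j$ and $\sum 1/\sqrt j$, and no additional machinery beyond Theorem~\ref{the:method-ring} is required.
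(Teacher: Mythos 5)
Your proposal is correct and follows essentially the same route as the paper: apply Theorem~\ref{the:method-ring} with the canonical fitness-level partitions and evaluate $\sum_i 1/s_i$ and $\sum_i 1/s_i^{1/2}$ via $\sum_j 1/j = O(\log n)$ and $\sum_j 1/\sqrt{j} = O(\sqrt{n})$, with the jump level dominating both sums for $\Jump_k$. The only blemish is the intermediate claim $s_{n-k+1}\ge\binom{n}{k}^{-1}$ for the jump level, which is not the mutation probability (and can even exceed it for $k\ge 3$); the correct bound is $n^{-k}(1-1/n)^{n-k}\ge 1/(en^k)$, which you state anyway and which is all the argument needs.
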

\begin{proof}
For \OM{} we choose the canonical partition $A_i := \{x \mid \OM(x) = i\}$. The probability of increasing the current fitness from fitness level~$i$ is at least $s_i\geq (n-i)\cdot 1/(en)$ since there are $n-i$ Hamming neighbors of larger fitness and a specific Hamming neighbor is created with probability at least $1/n \cdot (1-1/n)^{n-1} \ge 1/(en)$.
The second sum in Theorem~\ref{the:method-ring} is
\[
\frac{1}{\mu} \cdot \sum_{i=0}^{n-1} \frac{en}{n-i} = \frac{en}{\mu} \sum_{i=1}^n \frac{1}{i} = \mathord{O}\mathord{\left(\frac{n \log n}{\mu}\right)}.
\]
The first sum in Theorem~\ref{the:method-ring} is
\begin{align*}
2\sum_{i=0}^{n-1} \left(\frac{en}{n-i} \cdot \frac{1}{\pinform}\right)^{1/2}
&\;= 2\left(\frac{en}{\pinform}\right)^{1/2} \sum_{i=1}^n \frac{1}{\sqrt{i}}\\
&\;\le 2\left(\frac{en}{\pinform}\right)^{1/2} \int_{0}^n \frac{1}{\sqrt{i}} \; \mathrm{d}i
\le  2\left(\frac{en}{\pinform}\right)^{1/2} \cdot \sqrt{n} = O(n).
\end{align*}
For unimodal functions we choose a partition $A_1, \dots, A_{d+1}$ where $A_i$ contains all search points with the $i$-th smallest function value.
The probability of improving the fitness from level $i$ is
at least $s_i \geq 1/(en)$ because there is at least one search point in the next fitness level which is at Hamming distance one.
Theorem~\ref{the:method-ring} gives an upper bound of
\[
2\sum_{i=1}^{d} \left(\frac{en}{\pinform}\right)^{1/2}  + \sum_{i=1}^{d} \frac{en}{\mu} \le 2d \cdot \left(\frac{en}{\pinform}\right)^{1/2} + \frac{den}{\mu} = \mathord{O}\mathord{\left(\frac{dn^{1/2}}{\pinform^{1/2}} + \frac{dn}{\mu}\right)}.
\]
For $\Jump_k$ functions and $i \notin \{n-k, n\}$ the fitness levels $A_i$ are chosen similarly to $\OM$, yielding the same terms in the upper bound as for $\OM$. (In fact, results are even better as the hardest fitness levels for $\OM$ are replaced by easy fitness levels.) But to reach the highest level from $n-k$ 1-bits, i.\,e., level~$A_n$, a specific bit string with Hamming distance $k$ has to be created. This has probability at least
\[
s_{n} \geq \left(\frac{1}{n}\right)^k\cdot \left(1-\frac{1}{n}\right)^{n-k}\geq \left(\frac{1}{n}\right)^k\cdot \left(1-\frac{1}{n}\right)^{n-1}\geq\frac{1}{en^k}\;.
\]
Theorem~\ref{the:method-ring} and the above bound for $\OM$ give
\[
\mathord{O}\mathord{\left(\frac{n}{\pinform^{1/2}} + \frac{n \log n}{\mu}\right)} + 2\left(\frac{en^k}{\pinform}\right)^{1/2} + \frac{en^k}{\mu} = \mathord{O}\mathord{\left(\frac{n^{k/2}}{\pinform^{1/2}} + \frac{n^k}{\mu}\right)}.\qedhere
\]
\end{proof}
The speedups obtained are indeed significant, particularly for those functions where improvements are hard to find.

The proof of Theorem~\ref{the:application-ring} uses well-known fitness-level partitions~\cite{Wegener2002,Droste2002}, and hence it simply consists of plugging in known values $s_i$ and simplifying. This shows how easy it is to obtain results for parallel EAs based on analyses of panmictic EAs.

By a strange coincidence, the speedups obtained through parallelization on ring graphs are as large as those obtained through quantum search~\cite{Johannsen2010a}.

\section{Parallel EAs with Two-Dimensional Grids and Tori}
\label{sec:torus}

For two-dimensional grids and tori we adapt Theorem~\ref{the:fitness-levels-for-parallel-EAs} in a similar manner, making an effort to get the best possible leading constant in the first term of the running time bound. We also consider applications of the resulting theorem similar to the applications for ring graphs.
\begin{theorem}
\label{the:method-torus}
Consider the setting from Theorem~\ref{the:method-ring}.
The expected parallel running time of the island model on a grid or torus topology with side lengths $\sqrt{\mu} \times \sqrt{\mu}$ is bounded by
\[
\frac{3^{5/3}}{\pinform^{2/3}} \sum_{i=1}^{m-1} \frac{1}{s_i^{1/3}} + \frac{1}{\mu} \sum_{i=1}^{m-1} \frac{1}{s_i}\;.
\]
The expected communication effort is by a factor of at most $4\pcomm \mu$ larger than the expected parallel time.
\end{theorem}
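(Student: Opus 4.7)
The plan is to mirror the proof of Theorem~\ref{the:method-ring} but to replace the essentially one-dimensional propagation of a ring by the two-dimensional propagation of a torus. The argument splits into three ingredients: (i) an upper bound on $\E{\Tinform{K}}$ for informing $K$ islands on the $\sqrt{\mu}\times\sqrt{\mu}$ torus, (ii) an application of Lemma~\ref{lem:fitness-level-k} with a balanced choice of $K$ on each fitness level, and (iii) the usual case analysis separating out the tail contribution $\frac{1}{\mu}\sum_i \frac{1}{s_i}$ when the optimal~$K$ exceeds~$\mu$.

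For ingredient~(i), I would invoke the shortest-path reasoning underlying Lemma~\ref{lem:propagation-Doerr}. From a single informed vertex, the set of vertices within graph-distance~$r$ is the discrete $L_1$-ball, which contains $\Theta(r^2)$ vertices, each reachable by a shortest path of at most~$r$ edges. Lemma~\ref{lem:propagation-Doerr} then yields $\E{\Tinform{K}} \le \alpha\sqrt{K}/\pinform$ for a suitable absolute constant~$\alpha$, provided~$K$ is large enough for the linear term in Lemma~\ref{lem:propagation-Doerr} to dominate the logarithmic one. The boundary case where $K$ is small is handled directly by the trivial bound used in step~(iii).

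For ingredient~(ii), substituting the bound of step~(i) into Lemma~\ref{lem:fitness-level-k} produces the per-level estimate
\[
\frac{\alpha\sqrt{K}}{\pinform} + 1 + \frac{1}{K\,s_i}.
\]
Minimising in $K$ (formally, setting the $K$-derivative to zero) gives the balanced choice $K = (2\pinform/(\alpha s_i))^{2/3}$. At this point the two $K$-dependent terms lie in the ratio $2:1$ and sum to $3\cdot 2^{-2/3}\alpha^{2/3}/(\pinform^{2/3}s_i^{1/3})$, so the stated leading constant $3^{5/3}$ corresponds precisely to $\alpha = 6$. Summing these per-level contributions over $i=1,\dots,m-1$ produces the first term of the theorem.

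For ingredient~(iii), I would copy the ring proof's case split verbatim. If the balanced~$K$ exceeds~$\mu$, one applies Lemma~\ref{lem:fitness-level-k} with $K:=\mu$; the $1/(K s_i)$-piece then contributes the second summand $\frac{1}{\mu}\sum_i \frac{1}{s_i}$ of the statement. If~$K$ falls below~$1$, the trivial bound~$1/s_i$ is already dominated by $1/(\pinform^{2/3}s_i^{1/3})$. Finally, the communication bound follows routinely: a torus on $\mu$ nodes has $4\mu$ directed edges and each is traversed by a migrant with probability at most~$\pcomm$ per generation, so the expected number of migrations per generation is at most $4\pcomm\mu$, which multiplies the expected parallel time. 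The principal obstacle is step~(i): extracting the sharp propagation constant $\alpha=6$ needed for $3^{5/3}$ requires a tailored argument that builds up a ``line'' of informed islands along one axis and then spreads perpendicularly along the other, rather than a black-box application of Lemma~\ref{lem:propagation-Doerr}, and also requires absorbing the logarithmic tail in the relevant regime of~$K$.
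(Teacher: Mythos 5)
Your overall architecture matches the paper's proof exactly: a propagation bound of the form $\E{\Tinform{k}}\le \alpha\sqrt{k}/\pinform$ on the torus, Lemma~\ref{lem:fitness-level-k} with the balanced choice $k=3^{-2/3}(\pinform/s_i)^{2/3}$, and the same three-way case split as in the ring proof (your calculation that the two balanced terms stand in ratio $2:1$ and that $\alpha=6$ is exactly the constant needed for $3^{5/3}$ is correct, as is the communication bound via the $4\mu$ directed edges of the torus). The gap is precisely the step you flag as ``the principal obstacle'': you never actually establish $\E{\Tinform{k}}\le 6\sqrt{k}/\pinform$ for \emph{all} $1\le k\le\mu$, and a black-box application of Lemma~\ref{lem:propagation-Doerr} does not give it, because for small and moderate $k$ the logarithmic term $\Theta(\ln(k)/\pinform)$ of that lemma exceeds $6\sqrt{k}/\pinform$. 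Your suggested fix --- a tailored argument informing a line along one axis and then spreading perpendicularly --- is not carried out, so the proof as written is incomplete at its quantitatively critical point.

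The paper closes this hole differently and more cheaply: it keeps the black-box bound from Lemma~\ref{lem:propagation-Doerr} (applied with path length $2\sqrt{k}-2$, since a $\sqrt{k}\times\sqrt{k}$ patch around the initially informed vertex contains $k$ vertices within that graph distance, and with $c=4$), and supplements it with a second, elementary estimate $\E{\Tinform{k}}\le k/(2\pinform)$ obtained from Johannsen's variable drift theorem (Theorem~\ref{drift:johannsen}): as long as more than one vertex is uninformed, at least two uninformed vertices border informed ones, so the number of informed vertices grows by at least $2\pinform$ in expectation per generation. The minimum of the two bounds is then checked to be at most $(6\sqrt{k}-1)/\pinform$ for every $k$; the linear drift bound covers exactly the small-$k$ regime where the logarithm in Lemma~\ref{lem:propagation-Doerr} dominates, and the $\sqrt{k}$-term of that lemma takes over for large $k$. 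If you add this min-of-two-bounds argument to your ingredient~(i), the rest of your proposal goes through essentially verbatim.
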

\begin{proof}
Note that within a square area of $\sqrt{k} \times \sqrt{k}$ vertices in the graph all shortest paths between any two vertices have at most $2\sqrt{k}-2$ edges.
Applying Lemma~\ref{lem:propagation-Doerr} with $k' := 2\sqrt{k}-2$, $s_{k'} \ge k$ and $c=4$ we have that for every $k \le \mu$ the expected time until $k$ islands are informed is bounded by
\[
\max\left\{\frac{8/3 \cdot \sqrt{k}-8/3}{\pinform}, \frac{32/3 \cdot \ln(4k)}{\pinform}\right\}.
\]
We also get an upper bound of $k/(2\pinform)$ using Johannsen's variable drift theorem~\cite{Johannsen2010}, Theorem~\ref{drift:johannsen} in the appendix, as follows. If there is more than one uninformed vertex, there are always at least two vertices neighboring to informed ones. So the expected number of informed vertices increases by $2\pinform$ in expectation. Applying Johannsen's drift theorem as for the bidirectional ring gives an upper bound of $k/(2\pinform)$.
It is easy to check that the best upper bound is as follows: for all $k \in \N$
\[
\min\left\{\frac{k}{2\pinform}, \max\left\{\frac{8/3 \cdot \sqrt{k}-8/3}{\pinform}, \frac{32/3 \cdot \ln(4k)}{\pinform}\right\}\right\} \le \frac{6\sqrt{k}-1}{\pinform}.
\]
Now, if $1 \le k := 3^{-2/3} \cdot (\pinform/s_i)^{2/3} \le \mu$ (ignoring rounding issues) by Lemma~\ref{lem:fitness-level-k} the expected number of generations on fitness level~$i$ is bounded by
\[
\frac{6\sqrt{k}-1}{\pinform} + 1 + \frac{1}{k} \cdot \frac{1}{s_i}
\le \frac{6 \cdot 3^{-1/3}}{\pinform^{2/3}s_i^{1/3}} + \frac{3^{2/3}}{\pinform^{2/3}s_i^{1/3}} = \frac{3^{5/3}}{\pinform^{2/3}s_i^{1/3}}.
\]
If $3^{-2/3} \cdot (\pinform/s_i)^{2/3} < 1$ we trivially get an upper bound of
\[
\frac{1}{s_i} \le \frac{3^{2/3}}{\pinform^{2/3}s_i^{1/3}}.
\]
If $3^{-2/3} \cdot \pinform^{2/3}/s_i^{2/3} > \mu$, we get for $k := \mu$ an upper bound of
\[
\frac{6\sqrt{\mu}-1}{\pinform} + 1 + \frac{1}{\mu} \cdot \frac{1}{s_i} < \frac{2 \cdot 3^{2/3}}{\pinform^{2/3}s_i^{1/3}} + \frac{1}{\mu} \cdot \frac{1}{s_i}.
\]
Taking the maximum of the above upper bounds gives
\begin{align*}
\max\left(\frac{3^{5/3}}{\pinform^{2/3}s_i^{1/3}},
\frac{2 \cdot 3^{2/3}}{\pinform^{2/3}s_i^{1/3}} + \frac{1}{\mu} \cdot \frac{1}{s_i}\right)
&\;\le \frac{3^{5/3}}{\pinform^{2/3}s_i^{1/3}} +  \frac{1}{\mu} \cdot \frac{1}{s_i}
.
\end{align*}
Summing over all fitness levels yields the claim.
\end{proof}
Note that the communication effort in one generation is asymptotically as large as for ring graphs, but for large~$\pinform$ the parallel running time is generally smaller.
If $\pinform < 3s_i$ then again the trivial upper bound $1/s_i$ is better as then the spread of information is too slow.

Compared to a single island, in a torus the expected waiting time for every fitness level can be replaced by its third root. This leads to improved upper bounds for unimodal functions and $\Jump_k$.
\begin{theorem}
The following holds for the parallel \EA with transmission probability~$\pinform$ on a grid or torus topology and side lengths at least $\sqrt{\mu} \times \sqrt{\mu}$:
\begin{itemize}
\item $\E{\paralleltime} = \mathord{O}\mathord{\left(\frac{n}{\pinform^{2/3}} + \frac{n \log n}{\mu}\right)}$ for \OM{},
\item $\E{\paralleltime} = \mathord{O}\mathord{\left(\frac{dn^{1/3}}{\pinform^{2/3}} + \frac{dn}{\mu}\right)}$ for every unimodal function with $d+1$ function values,
\item $\E{\paralleltime} = \mathord{O}\mathord{\left(\frac{n + n^{k/3}}{\pinform^{2/3}} + \frac{n^k}{\mu}\right)}$ for $\Jump_k$ with $k \ge 2$.
\end{itemize}
\end{theorem}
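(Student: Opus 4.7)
The plan is to proceed exactly as in the proof of Theorem~\ref{the:application-ring}, but plugging the known fitness-level success probabilities $s_i$ into Theorem~\ref{the:method-torus} instead of Theorem~\ref{the:method-ring}. Since the second term $\frac{1}{\mu}\sum_{i=1}^{m-1} 1/s_i$ is identical in both topology theorems, the contribution from the ``parallel work'' term is already known from the ring proof: it gives $O(n \log n/\mu)$ for \OM, $O(dn/\mu)$ for unimodal functions, and $O(n^k/\mu)$ for $\Jump_k$. All that remains is to re-evaluate the first (topology-dependent) term $\frac{3^{5/3}}{\pinform^{2/3}} \sum_{i=1}^{m-1} s_i^{-1/3}$ for each of the three function classes.

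For \OM, use the canonical partition $A_i = \{x : \OM(x) = i\}$ with $s_i \ge (n-i)/(en)$. Then the first term becomes
\[
\frac{3^{5/3}}{\pinform^{2/3}} \sum_{i=0}^{n-1} \left(\frac{en}{n-i}\right)^{1/3} = O\!\left(\frac{n^{1/3}}{\pinform^{2/3}} \sum_{j=1}^n j^{-1/3}\right),
\]
and estimating $\sum_{j=1}^n j^{-1/3}$ by an integral gives $O(n^{2/3})$, so the first term is $O(n/\pinform^{2/3})$, as claimed. For unimodal $f$ with $d+1$ fitness values, take the canonical fitness-level partition with $s_i \ge 1/(en)$; then the first term is $O(d \cdot (en)^{1/3}/\pinform^{2/3}) = O(dn^{1/3}/\pinform^{2/3})$.

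For $\Jump_k$, use the \OM partition on all levels except $i=n-k$ (which is subsumed by a single step to the optimum with $s_n \ge 1/(en^k)$, following the argument in the proof of Theorem~\ref{the:application-ring}). Levels $i < n-k$ and levels in the ``descending branch'' are at least as easy as the corresponding \OM levels and contribute at most $O(n/\pinform^{2/3})$ to the first sum. The single hard step contributes $\frac{3^{5/3}}{\pinform^{2/3}} \cdot (en^k)^{1/3} = O(n^{k/3}/\pinform^{2/3})$. Adding these and combining with the second term yields the bound $O((n + n^{k/3})/\pinform^{2/3} + n^k/\mu)$.

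No step presents a real obstacle: everything is mechanical substitution of $s_i$ values into Theorem~\ref{the:method-torus} combined with the same integral estimates used in the ring case. The only small care needed is to handle the $\Jump_k$ case by splitting the sum at the jump level and observing that the $O(n^{k/3}/\pinform^{2/3})$ term dominates the \OM contribution $O(n/\pinform^{2/3})$ only for $k \ge 3$, so the bound is stated as a sum $n + n^{k/3}$ to cover both regimes of~$k$.
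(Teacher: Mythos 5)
Your proposal is correct and follows essentially the same route as the paper: reuse the fitness-level partitions and $s_i$ values from the ring application, note that the second terms coincide, and re-evaluate only the first sum of Theorem~\ref{the:method-torus} via the integral estimate $\sum_{j=1}^n j^{-1/3} = O(n^{2/3})$ for \OM, the trivial $d\cdot(en)^{1/3}$ bound for unimodal functions, and the split at the jump level for $\Jump_k$. All three computations match the paper's.
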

\begin{proof}
We choose the same partitions as in the proof of Theorem~\ref{the:application-ring}. Note that the second terms in Theorem~\ref{the:method-ring} and~\ref{the:method-torus} are identical, so we only estimate the first terms and refer to Theorem~\ref{the:application-ring} for the second terms.

For $\OM$ the first sum in Theorem~\ref{the:method-torus} is
\begin{align*}
\frac{3^{5/3}}{\pinform^{2/3}} \sum_{i=0}^{n-1} \left(\frac{en}{n-i}\right)^{1/3}
&\;= \frac{3^{5/3} e^{1/3}n^{1/3}}{\pinform^{2/3}} \sum_{i=1}^{n} \left(\frac{1}{i}\right)^{1/3}\\
&\;\le \frac{3^{5/3} e^{1/3}n^{1/3}}{\pinform^{2/3}} \int_{i=0}^{n} \left(\frac{1}{i}\right)^{1/3} \;\mathrm{d}i\\
&\;= \frac{3^{5/3} e^{1/3}n^{1/3}}{\pinform^{2/3}} \cdot \frac{3}{2} \cdot n^{2/3}
= \frac{3^{8/3}/2 \cdot e^{1/3}n}{\pinform^{2/3}}.
\end{align*}
This gives an upper bound of
\[
\mathord{O}\mathord{\left(\frac{n}{\pinform^{2/3}} + \frac{n \log n}{\mu}\right)}.
\]
For unimodal functions Theorem~\ref{the:method-torus} gives
\[
\frac{3^{5/3}}{\pinform^{2/3}} \sum_{i=1}^{d} \left(\frac{en}{\pinform}\right)^{1/3}  + \sum_{i=1}^{d} \frac{en}{\mu}
\le \frac{3^{5/3}d \cdot e^{1/3}n^{1/3}}{\pinform^{2/3}} + \frac{den}{\mu}
= \mathord{O}\mathord{\left(\frac{dn^{1/3}}{\pinform^{2/3}} + \frac{dn}{\mu}\right)}.
\]
For $\Jump_k$ we get
\[
\mathord{O}\mathord{\left(\frac{n}{\pinform^{2/3}} + \frac{n \log n}{\mu}\right)} + 3^{5/3} \cdot \frac{(en^k)^{1/3}}{\pinform^{2/3}} + \frac{en^k}{\mu} = \mathord{O}\mathord{\left(\frac{n + n^{k/3}}{\pinform^{2/3}} + \frac{n^k}{\mu}\right)}.\qedhere
\]
\end{proof}

\section{Parallel EAs with Hypercube Graphs}
\label{sec:hypercube}

Hypercube graphs are popular topologies in parallel computation. In a $d$-dimensional hypercube each vertex has a label of $d$ bits. Two vertices are neighboring if and only if their labels differ in exactly one bit. The number of vertices is then $2^d$, and each vertex has $d$ neighbors. The diameter of a $d$-dimensional hypercube is~$d$, hence only logarithmic in the size of the graph. The small diameter implies that in many communication models information is spread rapidly, even though the degree of vertices is quite small. With regard to the propagation process investigated here, we get a small first term in the following running time bound, and still have a very moderate communication effort.
\begin{theorem}
\label{the:method-hypercube}
Consider the setting from Theorem~\ref{the:method-ring}.
The expected parallel running time of the island model on a $(\log \mu)$-dimensional hypercube graph with $\mu$ islands is bounded by
\[
\frac{49m+24  \sum_{i=1}^{m-1} \log(\frac{1}{s_i}) }{\pinform}  + \frac{1}{\mu} \cdot \sum_{i=1}^{m-1} \frac{1}{s_i}.
\]
The expected communication effort is by a factor of at most $\pcomm \mu \log \mu$ larger than the expected parallel time.
\end{theorem}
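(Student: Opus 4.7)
The proof will follow the same template as Theorems~\ref{the:method-ring} and~\ref{the:method-torus}: apply Lemma~\ref{lem:fitness-level-k} fitness level by fitness level, bounding the hitting time $\E{\Tinform{K}}$ via Lemma~\ref{lem:propagation-Doerr} for a suitably chosen target number~$K$ of informed vertices. The new ingredient is a hypercube-specific estimate of the number of vertices within graph distance~$k$ of any source.

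In the $d$-dimensional hypercube with $d = \log\mu$, the vertices at distance exactly $j$ from a given source form a set $V_j$ with $|V_j| = \binom{d}{j}$, so $s_k = \sum_{j=1}^k \binom{d}{j}$. Because $\binom{d}{j}$ grows roughly geometrically with $j$ for $j \le d/2$, the smallest distance $k$ needed so that $s_k \ge K$ for any $K \le \mu$ satisfies $k = O(\log K / \log d) = O(\log K / \log\log\mu)$, which in particular is $O(\log K)$. Plugging $c=2$ into Lemma~\ref{lem:propagation-Doerr} then yields $\E{\Tinform{K}} = O((k + \log K)/\pinform) = O(\log K / \pinform)$, a significant improvement over the trivial diameter-based bound $O(\log\mu / \pinform)$ when $K \ll \mu$.

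For each fitness level $i$ my plan is to set $K_i := \min\{\mu,\ \pinform/s_i\}$, which balances the two pieces of Lemma~\ref{lem:fitness-level-k}. The propagation term contributes $O(\log K_i / \pinform) = O(\log(1/s_i)/\pinform)$ in both cases (using $\log \pinform \le 0$ in the first case, and $\log\mu \le \log(1/s_i)$ in the second, since $K_i = \mu$ only when $s_i \le \pinform/\mu \le 1/\mu$). The ``leave the level'' term $1 + 1/(K_i s_i)$ equals $1 + 1/\pinform$ in the first case and $1 + 1/(\mu s_i)$ in the second; either way it is absorbed by an additive $O(1/\pinform) + 1/(\mu s_i)$. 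Summing the per-level bound $O(\log(1/s_i)/\pinform) + O(1/\pinform) + 1/(\mu s_i)$ over $i=1,\dots,m-1$ reproduces the two terms of the theorem; the concrete constants $49$ and $24$ follow by carefully propagating the $c/(c-1)$ and $\max\{4k, 8\ln(cs_k)\}$ factors of Lemma~\ref{lem:propagation-Doerr} through the calculation. The communication bound is immediate: in each generation every island attempts to migrate independently to each of its $\log\mu$ neighbors with probability $\pinform$, so the expected number of migrated individuals per generation is $\pinform\mu\log\mu$, and multiplying by $\E{\paralleltime}$ gives the stated factor $\pcomm \mu \log\mu$.

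The main obstacle is the hypercube counting: I need to verify rigorously that $\sum_{j=1}^k \binom{d}{j}$ crosses any target $K \le \mu$ at a distance $k$ proportional to $\log K / \log d$ rather than $\log K$, and then confirm that in Lemma~\ref{lem:propagation-Doerr} the $8\ln(cs_k)$ term dominates the $4k$ term, so that the log-of-target factor (not the diameter) drives the bound. The remaining bookkeeping is routine case analysis on whether $K_i=\mu$ or $K_i=\pinform/s_i$, and constant-chasing to hit the specific $49$ and $24$ in the statement.
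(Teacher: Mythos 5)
Your proposal matches the paper's proof essentially step for step: the hypercube ball-volume bound $s_k \ge 2^k$ fed into Lemma~\ref{lem:propagation-Doerr} with $c=2$ to get $\E{\Tinform{K}} = O(\log K/\pinform)$, then Lemma~\ref{lem:fitness-level-k} with the number of informed islands balanced at roughly $\pinform/s_i$ and capped at $\mu$ (plus the trivial-case analysis), and the $\mu\log\mu$ directed edges for the communication bound. The only quibble is your side claim that the covering radius for $K$ vertices is $O(\log K/\log d)$, which fails for $K$ near $\mu$ (the antipodal vertex sits at distance $d=\log\mu$), but you only use the weaker, correct bound $O(\log K)$, which is exactly what the paper uses.
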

\begin{proof}
In the notation of Lemma~\ref{lem:propagation-Doerr} we have for the hypercube and $1 \le k \le \log \mu$
\[
s_k = \sum_{i=1}^k \binom{\mu}{i} \ge 2^k.
\]
Invoking Theorem~\ref{lem:propagation-Doerr} with $c=2$, the expected time until $2^k$ vertices are informed is therefore at most
\[
\frac{16 \ln(2 \cdot 2^k)}{\pinform} = \frac{\frac{16}{\ln 2} \cdot (k+1)}{\pinform} < \frac{24 (k+1)}{\pinform}.
\]
By Lemma~\ref{lem:fitness-level-k} the expected time on fitness level~$i$ is hence bounded, for any integer $0 \le k \le \log \mu$, by
\begin{equation}
\label{eq:hypercube-time-on-level-i}
\frac{24(k+1)}{\pinform} + 1 + \frac{1}{2^k} \cdot \frac{1}{s_i} \le \frac{25}{\pinform} + \frac{24k}{\pinform} + \frac{1}{2^k} \cdot \frac{1}{s_i}.
\end{equation}
If $\pinform/(24s_i) < 1$, we get a trivial upper bound of~$1/s_i \le 24/\pinform$. If $\pinform/(24s_i) > \mu$, which implies $d < \log(\pinform/(24s_i)) \le \log(1/s_i)$, we get an upper bound of
\[
\frac{25 + 24d}{\pinform} + \frac{1}{\mu} \cdot \frac{1}{s_i}
< \frac{25 + 24 \log(\frac{1}{s_i})}{\pinform} + \frac{1}{\mu} \cdot \frac{1}{s_i}.
\]
Otherwise, \eqref{eq:hypercube-time-on-level-i} is minimized for $2^k=\pinform/(24 s_i)$, leading to
\[
\frac{25}{\pinform} + \frac{24 \log(\frac{\pinform}{24s_i})}{\pinform} + \frac{24}{\pinform} \le
\frac{49}{\pinform} + \frac{24 \log(\frac{1}{s_i})}{\pinform}.
\]
The maximum over all these bounds is at most
\[
\frac{49+24\log(\frac{1}{s_i})}{\pinform} + \frac{1}{\mu} \cdot \frac{1}{s_i}.
\]
Summing over all fitness levels yields the claim.
\end{proof}

Results for our example applications are as follows.
\begin{theorem}
The following holds for the parallel \EA with transmission probability~$\pinform$ on a $(\log \mu)$-dimensional hypercube:
\begin{itemize}
\item $\E{\paralleltime} = \mathord{O}\mathord{\left(\frac{n}{\pinform} + \frac{n \log n}{\mu}\right)}$ for \OM{},
\item $\E{\paralleltime} = \mathord{O}\mathord{\left(\frac{d \log n}{\pinform} + \frac{dn}{\mu}\right)}$ for every unimodal function with $d+1$ function values,
\item $\E{\paralleltime} = \mathord{O}\mathord{\left(\frac{n + k \log n}{\pinform} + \frac{n^k}{\mu}\right)}$ for $\Jump_k$ with $k \ge 2$.
\end{itemize}
\end{theorem}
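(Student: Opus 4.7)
The plan is to apply Theorem~\ref{the:method-hypercube} directly to the same fitness-level partitions and success probabilities that were used in the proof of Theorem~\ref{the:application-ring}. Since the second summand in the hypercube bound, $\frac{1}{\mu}\sum_{i=1}^{m-1} 1/s_i$, coincides exactly with the second summand in the ring bound, the estimates $O(n \log n / \mu)$ for \OM, $O(dn/\mu)$ for unimodal functions, and $O(n^k/\mu)$ for $\Jump_k$ carry over verbatim from Theorem~\ref{the:application-ring}. The only new work is to bound the first summand $(49m + 24\sum_{i=1}^{m-1} \log(1/s_i))/\pinform$ for each function class.

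For \OM I take $A_i := \{x : \OM(x) = i\}$, giving $s_i \ge (n-i)/(en)$ and $m = n+1$. Then
\[
\sum_{i=0}^{n-1} \log(1/s_i) \;\le\; \sum_{j=1}^{n} \log(en/j) \;=\; n \log(en) - \log(n!),
\]
which is $O(n)$ by Stirling's approximation. Combined with $49m = O(n)$ this yields $O(n/\pinform)$ for the first summand, hence the claimed bound. For unimodal functions, the partition by distinct fitness value gives $m = d+1$ and $s_i \ge 1/(en)$, so $\sum_{i=1}^{d} \log(1/s_i) \le d \log(en) = O(d \log n)$ and $49m = O(d)$; the first summand is then $O(d \log n / \pinform)$.

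For $\Jump_k$ I reuse the partition from Theorem~\ref{the:application-ring}: the \OM-like levels (indices with $i\notin\{n-k,n\}$) contribute a log-sum of $O(n)$ exactly as in the \OM calculation above; the final hard-jump transition from $A_n$ with $s_n \ge 1/(en^k)$ contributes a single term $\log(en^k) = O(k \log n)$; and the at most $k-1$ low-fitness ``gap'' levels each have $s \ge 1/(en)$, contributing $O(\log n)$ each for a total of $O(k \log n)$. With $49m = O(n+k)$ this gives a first summand of $O((n+k\log n)/\pinform)$. Adding the first and second summands in each of the three cases yields the three claimed bounds. The one mildly technical step is the Stirling cancellation collapsing the \OM log-sum to $O(n)$; everything else is routine bookkeeping on known $s_i$-values.
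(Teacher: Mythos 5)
Your proposal is correct and follows essentially the same route as the paper: plug the standard fitness-level partitions into Theorem~\ref{the:method-hypercube}, reuse the second summands from the ring analysis, and collapse the \OM log-sum to $O(n)$ via $\log(e^n n^n/n!) = O(n)$ using Stirling. The only (harmless) difference is that you account for the $k-1$ low-fitness gap levels of $\Jump_k$ explicitly, whereas the paper absorbs them into the \OM-like calculation.
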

\begin{proof}
For $\OM$ we have
\begin{align*}
\sum_{i=0}^{n-1} \log\left(\frac{1}{s_i}\right) =
\sum_{i=1}^{n} \log\left(\frac{en}{i}\right)
&\;= \log\left(\prod_{i=1}^n \frac{en}{i}\right)\\
&\;= \log\left(\frac{e^n n^n}{n!}\right)
\le \log\left(\frac{e^n n^n}{(n/e)^n}\right)
= \log\left(e^{2n}\right)
= 2n\log\left(e\right).
\end{align*}
Theorem~\ref{the:method-hypercube} gives an upper bound of
\[
\frac{49n+48n \log e}{\pinform}  + \mathord{O}\mathord{\left(\frac{n \log n}{\mu}\right)}
=
\mathord{O}\mathord{\left(\frac{n}{\pinform}  + \frac{n \log n}{\mu}\right)}.
\]
For unimodal functions Theorem~\ref{the:method-hypercube} gives
\[
\frac{49d + 24d \log(en)}{\pinform} + \mathord{O}\mathord{\left(\frac{dn}{\mu}\right)}
= \mathord{O}\mathord{\left(\frac{d \log n}{\pinform} + \frac{dn}{\mu}\right)}.
\]
For $\Jump_k$ we get
\begin{align*}
& \mathord{O}\mathord{\left(\frac{n}{\pinform}  + \frac{n \log n}{\mu}\right)} +
\frac{49+24 k\log(en)}{\pinform} + \mathord{O}\mathord{\left(\frac{n^k}{\mu}\right)}\\
=\;& \mathord{O}\mathord{\left(\frac{n + k \log n}{\pinform} + \frac{n^k}{\mu}\right)}
.
\qedhere
\end{align*}
\end{proof}

If $\pinform = \Omega(1)$, we get linear speedups for \OM if $\mu = O(\log n)$, and linear speedups for unimodal functions where the bound $O(dn)$ for a single island is tight, if $\mu = O(n/\log n)$. For $\Jump_k$, if $k = O(n/\log n)$ we can choose $\mu = O(n^{k-1})$ to get a linear speedup. As can be seen from Table~\ref{tab:running-times-best-mu-p=1} the expected parallel times for \LO and $\Jump_k$ are much better for the hypercube than for rings and torus graphs, if $\pinform$ is large.

\section{Parallel EAs with Complete Topologies}
\label{sec:PerfectParallelization}

Finally, we consider the densest topology, the complete graph $K_{\mu}$, where every island is neighboring to every other island. The complete graph is interesting because it represents an extreme case: the largest possible communication costs, but also the fastest possible spread of information.

For the special case of $\pinform=1$ a parallel \EA is basically equivalent to a (1+$\mu$)~EA, which creates $\mu$ offspring independently and then compares a best offspring against the current search point. The only difference is that the parallel \EA can store different individuals of the same fitness. But this issue is irrelevant when using the fitness-level method. Hence our results for a parallel \EA with a complete topology and $\pinform=1$ also apply for the (1+$\mu$)~EA. For $\pinform < 1$ the two models are generally different.

We start with a simple argument. Clearly, if there is at least one informed island, each other island will become informed with probability at least $\pinform$.
\begin{theorem}
\label{the:method-completegraph}
Consider the setting from Theorem~\ref{the:method-ring}.
The expected parallel running time of the island model on a complete topology is
\[
E(\paralleltime) \;\le\;  m + \frac{2m}{\pinform} +\frac{2}{\mu} \sum_{i=1}^{m-1} \frac{1}{s_i}\;.
\]
The expected communication effort is by a factor of at most $\pcomm \mu^2$ larger than the expected parallel time.
\end{theorem}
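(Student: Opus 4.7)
The plan is to apply Lemma~\ref{lem:fitness-level-k} on each fitness level with a well-chosen $k$ and then sum the resulting per-level bounds. Since the claimed overall bound has the form $1 + 2/\pinform + (2/\mu)(1/s_i)$ per level, the choice $k = \lceil \mu/2 \rceil$ is natural: the $(1/k)(1/s_i)$ term then contributes at most $(2/\mu)(1/s_i)$, and it remains to show $\E{\Tinform{\lceil \mu/2 \rceil}} \le 2/\pinform$ for the complete topology $K_\mu$.

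To establish this expected-time bound on $K_\mu$, I would track the number $U_t$ of uninformed islands after $t$ generations, starting with $U_0 = \mu - 1$. Since the islands run elitist EAs, once some island has reached level~$i$ at least one island remains informed forever, so each uninformed island becomes informed in the next generation with probability at least $\pinform$ (transmission from any single informed island suffices), independently of what happens to the other uninformed islands (by the independence assumption on edges). This gives $\E{U_{t+1} \mid U_t} \le (1-\pinform) U_t$ and, by iteration, $\E{U_t} \le \mu (1-\pinform)^t$. Markov's inequality then yields $\Prob{U_t \ge \mu/2} \le 2(1-\pinform)^t$. Because $\{\Tinform{\lceil \mu/2 \rceil} > t\} \subseteq \{U_t \ge \mu/2\}$, summing the tail yields
\[
\E{\Tinform{\lceil \mu/2 \rceil}} \;=\; \sum_{t=0}^\infty \Prob{\Tinform{\lceil \mu/2 \rceil} > t} \;\le\; \sum_{t=0}^\infty 2(1-\pinform)^t \;=\; \frac{2}{\pinform}.
\]

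Plugging this bound, together with $k = \lceil \mu/2 \rceil \ge \mu/2$, into Lemma~\ref{lem:fitness-level-k} gives expected time at most $1 + 2/\pinform + (2/\mu)(1/s_i)$ on level~$i$. Summing over $i = 1, \dots, m-1$ and relaxing $m-1$ to $m$ in the topology-dependent terms yields the stated bound $m + 2m/\pinform + (2/\mu) \sum_{i=1}^{m-1} 1/s_i$. For the communication effort, it suffices to note that in each generation every island attempts to migrate a copy to each of its $\mu - 1$ neighbors independently with probability at most $\pinform$, so the expected number of migrated individuals per generation is at most $\pinform \mu(\mu-1) \le \pinform \mu^2$; multiplying by $\E{\paralleltime}$ yields the claimed factor.

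The main obstacle is that on $K_\mu$ with $\pinform < 1$ the number of informed islands is genuinely random and no useful \emph{deterministic} lower bound exceeds~$1$, so Theorem~\ref{the:fitness-levels-for-parallel-EAs} cannot be applied directly with any meaningful $\mu_t > 1$. The key idea is to bypass that theorem and use Lemma~\ref{lem:fitness-level-k}, which only needs an expected hitting time, and then to convert the exponential decay of $\E{U_t}$ into a tail bound on $\Tinform{\lceil \mu/2 \rceil}$ via Markov's inequality---trading a deterministic growth statement (which is unavailable) for a short expectation calculation.
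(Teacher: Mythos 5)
Your proposal is correct and follows essentially the same route as the paper: apply Lemma~\ref{lem:fitness-level-k} with $k=\mu/2$ and show that $\E{\Tinform{\mu/2}}\le 2/\pinform$ on $K_\mu$. The only (immaterial) difference is how that propagation bound is justified---the paper uses additive drift on the expected number of newly informed islands per generation (at least $\pinform\mu/2$ while more than $\mu/2$ are uninformed), whereas you derive the geometric decay $\E{U_t}\le\mu(1-\pinform)^t$ of the uninformed count and convert it via Markov's inequality; both yield the same constant.
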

\begin{proof}
We estimate the expected time until at least $\mu/2$ islands are informed after an improvement. If more than $\mu/2$ islands are uninformed, the expected number of islands that become informed in one generation is at least $\pinform \mu/2$. By standard drift analysis arguments~\cite{He2004} the desired expectation is bounded by $2/\pinform$.

By Lemma~\ref{lem:fitness-level-k} we then get that the expected time on fitness level~$i$ is at most
\[
1 + \frac{2}{\pinform} + \frac{2}{\mu} \cdot \frac{1}{s_i}.
\]
Adding these times for all fitness levels proves the claim.
\end{proof}

As mentioned, the complete graph leads to a maximal spread of information. In comparison to the previous sections, we obtain the best upper bounds for the considered function classes. However, also the communication effort in one generation is maximal, so the expected total communication costs are also highest (cf.\ Tables~\ref{tab:running-times-best-mu-p=1} and~\ref{tab:running-times-full-throttle}).
\begin{theorem}
\label{the:application-complete}
Let $\mu \in \N$. The following holds for the expected parallel running time of the parallel \EA with topology $K_\mu$. In the case~$\pinform=1$, the same holds for the (1+$\mu$)~EA:
\begin{itemize}
\item $\E{\paralleltime} = \Oh{\frac{n}{\pinform}+\frac{n \log n}{\mu}}$ for \OM{},
\item $\E{\paralleltime} = \Oh{\frac{d}{\pinform}+\frac{dn}{\mu}}$ for every unimodal function with $d+1$ function values, and
\item $\E{\paralleltime} = \Oh{\frac{n}{\pinform}+\frac{n^k}{\mu}}$ for $\Jump_k$ with $k \ge 2$.
\end{itemize}
\end{theorem}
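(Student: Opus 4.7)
The plan is to reuse the fitness-level partitions and success probabilities $s_i$ that were already set up in the proof of Theorem~\ref{the:application-ring}, and simply plug them into the general bound
\[
E(\paralleltime) \le m + \frac{2m}{\pinform} + \frac{2}{\mu}\sum_{i=1}^{m-1}\frac{1}{s_i}
\]
from Theorem~\ref{the:method-completegraph}. Since the second sum is the same as the one we estimated for the ring and torus topologies, essentially all the calculation has been done before; the only new observation is that for the complete graph the first term grows only linearly in $m$, not as a sum over fitness levels.

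For \OM, I take the canonical partition $A_i = \{x \mid \OM(x) = i\}$ with $m = n+1$ and $s_i \ge (n-i)/(en)$. Then $m/\pinform = O(n/\pinform)$ and $\sum_{i=0}^{n-1} 1/s_i = O(n \log n)$ via the harmonic series, giving $O(n/\pinform + n\log n/\mu)$. For a unimodal function with $d+1$ function values, the canonical level partition has $m = d+1$ and $s_i \ge 1/(en)$, so $m/\pinform = O(d/\pinform)$ and $\sum 1/s_i \le d\cdot en = O(dn)$, yielding $O(d/\pinform + dn/\mu)$. For $\Jump_k$ I reuse the \OM-like partition but replace the hardest level by the jump, which has success probability $s_n \ge 1/(en^k)$; the rest of the sum is dominated by \OM-style terms, and $m = n+1$, so I obtain $O(n/\pinform + (n\log n + n^k)/\mu) = O(n/\pinform + n^k/\mu)$ since $k \ge 2$.

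For the special case $\pinform = 1$, the parallel \EA with topology $K_\mu$ is equivalent (from the point of view of fitness-level arguments) to the $(1+\mu)$~EA, as already observed in the text preceding Theorem~\ref{the:method-completegraph}, so the same bounds apply to that algorithm as well.

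There is essentially no hard step here; the proof is a direct instantiation of Theorem~\ref{the:method-completegraph}. The only thing to double-check is that in each case the first term $m/\pinform$ arising from the complete topology does not dominate the $1/\mu \cdot \sum 1/s_i$ term in a way that changes the asymptotic form of the stated bounds, which is immediate for all three examples.
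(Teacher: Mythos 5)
Your proposal is correct and matches the paper's intent exactly: the paper dismisses this result with ``The proof is obvious by now,'' meaning precisely the direct instantiation of Theorem~\ref{the:method-completegraph} with the fitness-level partitions and success probabilities already computed for the ring case, which is what you carry out. Your arithmetic for all three function classes and the remark on the $(1+\mu)$~EA equivalence at $\pinform=1$ are all consistent with the paper.
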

The proof is obvious by now.

The term $2/\pinform$ for the time until at least $\mu/2$ islands are informed is a reasonable estimate if $\pinform$ is large (e.\,g., $\pinform=\Omega(1)$). But for small $\pinform$ this estimation is quite loose as we have completely neglected that \emph{all} informed vertices have a chance to inform other islands.

We therefore also present a more detailed analysis for small~$\pinform$. The motivation for studying complete graphs and small $\pinform$ is that it captures random migration policies. Assume that each island decides randomly with probability $\pcomm$ for each other island whether to migrate individuals to that island.
Then this can be regarded as a complete topology with transmission probability~$\pinform$.

Values around $\pinform = 1/\mu$ seem particularly interesting as then in each generation one migration takes place for each island in expectation. There also a change of regime happens as we get different results for $\pinform > 1/\mu$ and $\pinform < 1/\mu$.
\begin{lemma}
\label{lem:propagation-complete}
Consider propagation with transmission probability~$\pinform$ on the complete topology with $\mu$ vertices.
Let $\Tinform{k}$ be as in Lemma~\ref{lem:fitness-level-k}, then
\[
\Tinform{\mu} \le \frac{8\log(\mu)}{\min(\pinform \mu, 1)}.
\]
\end{lemma}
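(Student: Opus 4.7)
The plan is to split the informing process on $K_\mu$ into two symmetric phases and apply multiplicative drift analysis to each. Phase~1 runs from the single initially informed island until at least $\mu/2$ islands are informed; Phase~2 continues until all remaining (at most $\mu/2$) islands become informed. Splitting at $\mu/2$ is natural because in either phase the ``other side'' of the partition has size at least $\mu/2$, which is what will drive the drift rate uniformly in both phases.

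For Phase~1, I would let $X_t$ denote the number of informed islands and note that when $X_t = k \le \mu/2$, any fixed uninformed island becomes informed in the next generation with probability $1-(1-\pinform)^k \ge 1 - e^{-\pinform k} \ge \min(\pinform k/2,\,1/2)$, using the elementary inequality $1 - e^{-x} \ge \min(x/2, 1/2)$ for $x \ge 0$. Since there are $\mu - k \ge \mu/2$ uninformed islands, each receiving independently, the expected drift is
\[
E\!\left[X_{t+1} - X_t \,\middle|\, X_t = k\right] \;\ge\; \frac{\mu}{2}\cdot\min\!\left(\frac{\pinform k}{2},\frac{1}{2}\right) \;\ge\; k\cdot\min\!\left(\frac{\pinform\mu}{4},\frac{1}{2}\right),
\]
where the last step uses $k \le \mu/2$ so that $\mu/(4k) \ge 1/2$. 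Setting $\delta := \min(\pinform\mu/4,\,1/2)$, this is a multiplicative drift of rate $\delta$, and the multiplicative drift theorem (equivalently, Johannsen's variable drift theorem, Theorem~\ref{drift:johannsen}, applied to $\mu - X_t$ down to $\mu/2$) gives an expected duration of $(1+\ln(\mu/2))/\delta = O(\log\mu/\min(\pinform\mu,1))$ for Phase~1.

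For Phase~2 I would use the symmetric quantity $Y_t := \mu - X_t$, which satisfies $Y_t \le \mu/2$ throughout. Each uninformed island is contacted in one generation with probability at least $1-(1-\pinform)^{X_t} \ge 1-(1-\pinform)^{\mu/2} \ge \min(\pinform\mu/4,\,1/2) = \delta$, so $E[Y_{t+1}\mid Y_t] \le (1-\delta)\,Y_t$. Multiplicative drift for decreasing processes then bounds the expected duration of this phase by the same $(1+\ln(\mu/2))/\delta$. Adding the two phase bounds yields $2(1+\ln(\mu/2))/\delta$, and a short case distinction ($\pinform\mu \ge 1$ versus $\pinform\mu < 1$) collapses this to $8\log(\mu)/\min(\pinform\mu,1)$.

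The main obstacle is the constant bookkeeping: one must apply the inequality $1-e^{-x} \ge \min(x/2,\,1/2)$ tightly enough that the effective drift reads $\min(\pinform\mu/4,\,1/2)$ rather than something weaker, and one must verify that the resulting constants combine cleanly across the two regimes $\pinform\mu \ge 1$ (where propagation is essentially deterministic after a few rounds) and $\pinform\mu < 1$ (where even the ``fast'' phase proceeds by multiplicative, not geometric-with-ratio-bounded-away-from-1, growth). Everything else reduces to two textbook applications of multiplicative drift.
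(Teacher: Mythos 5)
Your decomposition into a growth phase (from $1$ to $\mu/2$ informed islands) and a saturation phase (from $\mu/2$ to $\mu$) matches the paper's, and your Phase~2 is correct: once $X_t \ge \mu/2$, each uninformed island is informed independently with probability at least $1-(1-\pinform)^{\mu/2} \ge \delta$, so the number of uninformed islands satisfies a genuine multiplicative (downward) drift condition and an $O(\log(\mu)/\delta)$ bound follows. This is essentially the paper's Phase~2, which instead feeds the drift $i(\mu-i)\pinform/(1+i\pinform)$ into Johannsen's theorem; your version is, if anything, cleaner.

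Phase~1, however, has a genuine gap. The condition you establish there is $\E{X_{t+1}-X_t \mid X_t = k} \ge \delta k$: the drift is proportional to the \emph{current} number of informed islands, i.e.\ to the distance from $0$, not to the distance from the target $\mu/2$. This is an exponential-growth (``up-drift'') process, and no standard multiplicative drift theorem applies to it. In particular, $\E{X_{t+1}\mid X_t} \ge (1+\delta)X_t$ alone does not imply a hitting time of $O(\log(\mu)/\delta)$: the expectation can be dominated by rare large jumps while the typical trajectory grows far more slowly (this is exactly the Jensen-direction trap the paper warns about in Section~\ref{sec:general-upper-bounds} when explaining why the $\mu_j$ cannot simply be replaced by their expectations). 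Your fallback, Johannsen's theorem applied to $Y_t=\mu-X_t$, also fails: Theorem~\ref{drift:johannsen} requires a drift bound $h(y)$ that is monotone \emph{increasing} in the current state $y$, whereas your drift $\delta(\mu-y)$ is decreasing in $y$, so the best admissible bound is the constant $h\equiv\delta$, yielding only $O(\mu/\delta)$. The paper closes this gap using the distributional structure you mention but do not exploit: the number of newly informed vertices is binomial with parameters $\mu-i$ and at least $i\pinform/(1+i\pinform)$, so with probability at least $1/2$ it is at least its median, which essentially equals its mean. This gives a level-doubling argument---from at least $2^j$ to at least $2^{j+1}$ informed islands in expected $O((2+\pinform\mu)/(\pinform\mu))$ generations---and summing over the $\log(\mu)-1$ levels recovers the claimed bound. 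You would need to add this concentration step (or an equivalent up-drift argument) to make Phase~1 sound.
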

\begin{proof}
Let $X_t$ denote the random number of informed vertices after $t$ iterations. We first estimate the expected time until at least $\mu/2$ vertices become informed, and then estimate how long it takes to get from $\mu/2$ informed vertices to $\mu$ informed ones.

If $X_t = i$ each presently uninformed vertex is being informed in one iteration with probability (using Lemma~\ref{lemma:parallel-bounds})
\[
1-(1-p)^{i}
\ge 1 - \frac{1}{1+ip} = \frac{ip}{1+ip} =: iq.
\]
This holds independently from other presently uninformed vertices.
In fact, the number of newly informed vertices follows a binomial distribution with parameters $\mu-i$ and~$iq$.
The median of this binomial distribution is $i(\mu-i)q$ (assuming that this is an integer), hence with probability at least $1/2$ we have at least $i(\mu-i)q$ newly informed vertices in one iteration. Hence, it takes an expected number of at most 2 iterations to increase the number of informed vertices by $i(\mu-i) \cdot \frac{p}{1+ip}$, which for $i \le \mu/2$ is at least $i \cdot \frac{\pinform \mu}{2+\pinform \mu}$.

For every $0 \le j \le \log(\mu)-2$ the following holds. If $i \ge 2^j$ then in an expected number of 2 generations at least $2^j \cdot \frac{\pinform \mu}{2+\pinform \mu}$ new vertices are informed. The expected number of iterations for informing a total of $2^j$ new vertices is therefore at most $2 \cdot \frac{2+\pinform \mu}{\pinform \mu}$. Then we have gone from at least $2^j$ informed vertices to at least $2^{j+1}$ informed vertices. Summing up all times across all~$j$, the expected time until at least $2^{\log(\mu)-1} = \mu/2$ vertices are informed is at most
\[
2(\log(\mu)-1) \cdot \frac{2+\pinform \mu}{\pinform \mu}.
\]
For $\pinform \mu \le 1$ we have $\frac{2+\pinform \mu}{\pinform \mu} \le 3/(\pinform \mu)$, yielding an upper time bound of~$6 (\log(\mu)-1)/(\pinform \mu)$. Otherwise, we use $\frac{2+\pinform \mu}{\pinform \mu} \le 3$ to get a bound of~$6 (\log(\mu)-1)$.

For the time to get from $\mu/2$ to $\mu$ informed vertices, observe that the expected number of newly informed vertices is $\frac{i(\mu-i)p}{1+ip}$, if currently $i$ vertices are informed. This function is monotone decreasing if $i \ge \mu/2$. Applying Johannsen's drift theorem, Theorem~\ref{drift:johannsen}, for the number of uninformed nodes, using the above as drift, gives an upper bound of
\begin{align*}
& \frac{1+(\mu-1)p}{(\mu-1)p} + \int_1^{\mu/2} \frac{1+ip}{i(\mu-i)p} \;\mathrm{d}i\\
\le\;& \frac{1+(\mu-1)p}{(\mu-1)p} + \frac{\ln(\mu-1)(1+\pinform \mu)}{\pinform \mu}\\
\le\;& \frac{1+\pinform \mu}{\pinform \mu} + \frac{1}{\mu(\mu-1)p} + \frac{\ln(\mu-1)(1+\pinform \mu)}{\pinform \mu}\\
\le\;& \frac{(\ln(\mu)+1)(1+\pinform \mu) + \frac{1}{\mu-1}}{\pinform \mu}.
\end{align*}
For $\pinform \mu \le 1$ this is at most
\[
\frac{2\ln(\mu) + 2 + \frac{1}{\mu-1}}{\pinform \mu} \le \frac{2\ln(\mu) + 5/2}{\pinform \mu}.
\]
Otherwise, this is at most
\[
\frac{(\ln(\mu)+1) \cdot 2\pinform \mu + \frac{\pinform \mu}{\mu-1}}{\pinform \mu}
\le 2\ln(\mu) + 5/2.
\]
Together, along with $\ln(\mu) \le \log(\mu)$ this proves the claim.
\end{proof}

Combining Lemma~\ref{lem:propagation-complete} with Lemma~\ref{lem:fitness-level-k} gives the following.
Apart from an additive term~$m$, the case of $\pinform \le 1/\mu$ yields a bound where the first term is smaller by a factor of order $\log(\mu)/\mu$. For fairly large transmission probabilities, $\pinform \ge 1/\mu$, in the first term we have replaced the factor $1/\pinform$ by $\log(\mu)$. These improvements reflect that the complete graph can spread information much more quickly than previously estimated in the proof of Theorem~\ref{the:method-completegraph}.
\begin{theorem}
\label{the:method-completegraph-refined}
Consider the setting from Theorem~\ref{the:method-ring}.
The expected parallel running time of the island model on a complete topology is bounded as follows.
If $\pinform \ge 1/\mu$ we have
\[
E(\paralleltime) \;\le\;  m + 8m\log \mu + \frac{1}{\mu} \sum_{i=1}^{m-1} \frac{1}{s_i}\;
\]
and if $\pinform \le 1/\mu$ we have
\[
E(\paralleltime) \;\le\;  m + \frac{8m\log \mu}{\pinform \mu} + \frac{1}{\mu} \sum_{i=1}^{m-1} \frac{1}{s_i}\;.
\]
\end{theorem}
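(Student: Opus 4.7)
The plan is to combine the two preceding lemmas in a straightforward manner: Lemma~\ref{lem:propagation-complete} controls the information spread on the complete graph, while Lemma~\ref{lem:fitness-level-k} converts such a bound into a bound on the time spent on a single fitness level, and finally we sum over all levels.

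First I would fix an arbitrary fitness level $i$ and apply Lemma~\ref{lem:fitness-level-k} with the choice $k := \mu$. This yields an upper bound
\[
\E{\Tinform{\mu}} + 1 + \frac{1}{\mu} \cdot \frac{1}{s_i}
\]
on the expected time spent on level $i$ before an improvement is found. To estimate $\E{\Tinform{\mu}}$, I would invoke Lemma~\ref{lem:propagation-complete}, which gives
\[
\E{\Tinform{\mu}} \le \frac{8\log(\mu)}{\min(\pinform \mu, 1)}.
\]

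Next I would split into the two cases stated in the theorem. If $\pinform \ge 1/\mu$ then $\min(\pinform \mu, 1) = 1$, so $\E{\Tinform{\mu}} \le 8\log \mu$, giving per level a bound of $1 + 8\log\mu + \frac{1}{\mu s_i}$. Summing over the $m-1$ relevant fitness levels yields
\[
(m-1)(1 + 8\log\mu) + \frac{1}{\mu}\sum_{i=1}^{m-1}\frac{1}{s_i} \le m + 8m\log \mu + \frac{1}{\mu}\sum_{i=1}^{m-1}\frac{1}{s_i},
\]
which is the first bound. If instead $\pinform \le 1/\mu$ then $\min(\pinform \mu, 1) = \pinform \mu$, so $\E{\Tinform{\mu}} \le \frac{8\log\mu}{\pinform \mu}$; the analogous summation over fitness levels then produces the second bound
\[
m + \frac{8m\log\mu}{\pinform \mu} + \frac{1}{\mu}\sum_{i=1}^{m-1}\frac{1}{s_i}.
\]

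There is no real obstacle here, since all the heavy lifting has already been done in Lemma~\ref{lem:propagation-complete}. The only subtlety is making sure that the choice $k=\mu$ in Lemma~\ref{lem:fitness-level-k} is the right one: using fewer informed islands would improve the propagation term but worsen the $1/(k s_i)$ term, and because Lemma~\ref{lem:propagation-complete} already reaches the full population $\mu$ in time only logarithmic in $\mu$, taking $k=\mu$ is (up to logarithmic factors) optimal and keeps the presentation clean. With this choice the proof is just a direct two-case calculation, and the additive $m$ in both bounds comes simply from the $+1$ terms produced per fitness level by Lemma~\ref{lem:fitness-level-k}.
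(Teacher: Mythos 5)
Your proof is correct and matches the paper's argument exactly: the paper proves this theorem simply by stating that it follows from combining Lemma~\ref{lem:propagation-complete} (with $k=\mu$) and Lemma~\ref{lem:fitness-level-k}, which is precisely the two-case calculation you carried out.
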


For our example applications, the refinements in Theorem~\ref{the:method-completegraph-refined} result in the following refined bounds. As we only get improvements for $\pinform = O(1/\log(\mu))$, we do not mention the special case of the (1+$\mu$)~EA with $\pinform=1$.
\begin{theorem}
\label{the:application-complete}
Let $\mu \in \N$. The following holds for the expected parallel running time of the parallel \EA with topology $K_\mu$:
\begin{itemize}
\item $\E{\paralleltime} = \Oh{n \log(\mu) +\frac{n \log n}{\mu}}$ for \OM{} if $\pinform \ge 1/\mu$ and \newline
    $\E{\paralleltime} = \Oh{n + \frac{n \log \mu}{\pinform \mu} +\frac{n \log n}{\mu}}$ otherwise,
\item $\E{\paralleltime} = \Oh{d \log(\mu) +\frac{dn}{\mu}}$ for unimodal functions with $d+1$ values, if $\pinform \ge 1/\mu$, and \newline
     $\E{\paralleltime} = \Oh{d + \frac{d \log(\mu)}{\pinform \mu} +\frac{dn}{\mu}}$ otherwise, and
\item $\E{\paralleltime} = \Oh{n \log(\mu) +\frac{n^k}{\mu}}$ for $\Jump_k$ with $k \ge 2$, if $\pinform \ge 1/\mu$ and \newline
    $\E{\paralleltime} = \Oh{n + \frac{n \log(\mu)}{\pinform \mu} +\frac{n^k}{\mu}}$ otherwise.
\end{itemize}
\end{theorem}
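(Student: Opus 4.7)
The plan is to reuse the fitness-level partitions already used in the proof of Theorem~\ref{the:application-ring} for the ring topology and simply substitute the resulting $s_i$ values and partition sizes into the two refined bounds of Theorem~\ref{the:method-completegraph-refined}. Each of the three cases then reduces to bounding only the first (topology-dependent) term, since the second term $\frac{1}{\mu}\sum_{i=1}^{m-1}\frac{1}{s_i}$ coincides with quantities that have already been computed in the proof of Theorem~\ref{the:application-ring}.

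For \OM{} I would take $A_i=\{x : \OM(x)=i\}$ so that $m=n+1$ and $s_i\ge (n-i)/(en)$, giving $\frac{1}{\mu}\sum_{i=0}^{n-1}\frac{1}{s_i}=O(n\log n/\mu)$ as in the ring proof. Plugging $m=n+1$ into the two branches of Theorem~\ref{the:method-completegraph-refined} immediately yields $O(n\log \mu+n\log n/\mu)$ for $\pinform\ge 1/\mu$ and $O(n+n\log\mu/(\pinform\mu)+n\log n/\mu)$ otherwise. For a unimodal function with $d+1$ fitness values I would use the canonical partition according to these values with $s_i\ge 1/(en)$ (since every non-optimum has at least one strictly better Hamming neighbor), giving $\frac{1}{\mu}\sum\frac{1}{s_i}=O(dn/\mu)$ and $m=d+1$; both bulleted expressions follow directly.

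For $\Jump_k$ with $k\ge 2$ I would combine an \OM{}-style partition for the $A_i$ below the gap with a single additional level corresponding to the jump, for which $s_n\ge(1/n)^k(1-1/n)^{n-1}\ge 1/(en^k)$, as already computed in the proof of Theorem~\ref{the:application-ring}. This gives $m=O(n)$ and $\frac{1}{\mu}\sum\frac{1}{s_i}=O(n\log n/\mu)+O(n^k/\mu)=O(n^k/\mu)$. Feeding $m=O(n)$ into the two cases of Theorem~\ref{the:method-completegraph-refined} then yields $O(n\log\mu+n^k/\mu)$ for $\pinform\ge 1/\mu$ and $O(n+n\log\mu/(\pinform\mu)+n^k/\mu)$ otherwise, matching the claims.

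There is essentially no obstacle: every $s_i$ value is taken verbatim from the already-established ring analysis, and the only arithmetic is re-evaluating harmonic-type sums already dealt with there. The only point worth flagging is that for small $\pinform$ the naive bound $m/\pinform$ from Theorem~\ref{the:method-completegraph} would be too weak, which is precisely why the refined Theorem~\ref{the:method-completegraph-refined} with its $\min\{\pinform\mu,1\}$ denominator is invoked, producing the case split $\pinform\ge 1/\mu$ versus $\pinform<1/\mu$ that appears in each bullet.
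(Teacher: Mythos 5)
Your proposal is correct and is exactly the argument the paper intends: the paper states this result without writing out a proof (the analogous earlier theorem is dismissed with ``the proof is obvious by now''), the intended derivation being precisely to take the canonical fitness-level partitions and $s_i$ values from the ring application (Theorem~\ref{the:application-ring}) and substitute $m$ and $\frac{1}{\mu}\sum_i 1/s_i$ into the two branches of Theorem~\ref{the:method-completegraph-refined}. Your arithmetic for all three function classes checks out, including the absorption of the additive $m$ term and, for $\Jump_k$ with $k\ge 2$, the absorption of the $O(n\log n/\mu)$ contribution into $O(n^k/\mu)$.
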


\section{Experiments}
\label{sec:Experiments}

In order to complement the analytical results above, we also give experimental results on the behavior of island models for different topologies. As a detailed experimental evaluation is beyond the scope of this paper, we only present illustrative results for the two functions \OM and \LO.

First we investigate the parallel running time $\paralleltime$ for different transmission probabilities. The experiments were repeated 100 times per data point for the parallel \EA with $\mu=64$ islands and an instance size of $n=256$ for all example functions, varying the transmission probability $\pinform$ in steps of 0.01. Figure~\ref{fig:transmissionProbabilities} shows the behavior for the topologies $K_{64}$, a bidirectional ring graph, an $8 \times 8$ torus graph, and a $6$-dimensional hypercube.

{
\tikzset{/pgfplots/no markers}
\begin{figure}[hbt]
\centering
\tikzsetnextfilename{OM-different-topologies-differentPplus}
\subfigure[\OM]{\includegraphics{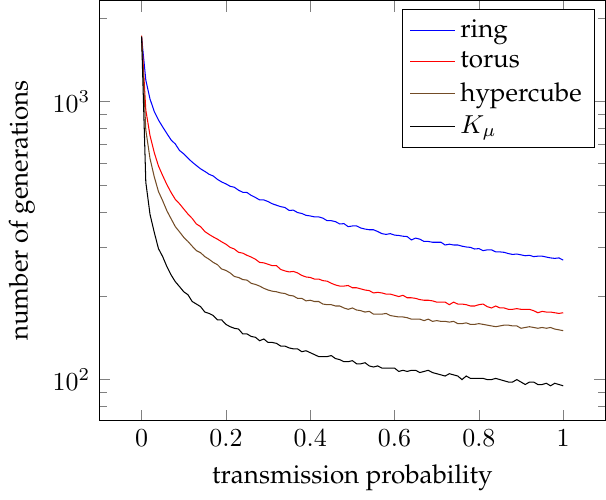}}
\tikzsetnextfilename{LO-different-topologies-differentPplus}
\subfigure[\LO]{\includegraphics{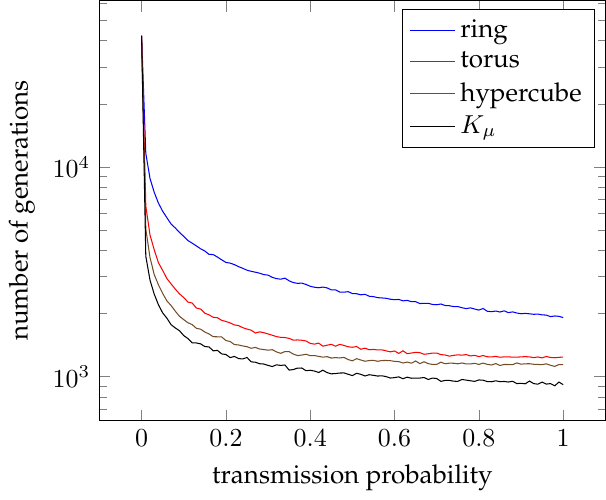}}
\caption{Average parallel running time for the parallel \EA using $\mu=64$ islands and different transmission probabilities, both for \OM and \LO on $n=256$ bits.}
\label{fig:transmissionProbabilities}
\end{figure}
}

Looking at the influence of the transmission probability on the running time, a higher transmission probability improves the running time behavior of the algorithm, also according to the expectations from our theoretical analysis. In particular, all not too small $\pinform$ lead to much smaller running times compared to the setting $\pinform=0$, i.\,e., $\mu$ independent runs of the \EA. This demonstrates for our functions that parallelization and migration can lead to drastic speedups. For larger or intermediate values for $\pinform$ the parallel running time does not vary much, as then for all topologies the running time is dominated by the second terms from our bounds: $1/\mu \cdot O(n \log n)$ and $1/\mu \cdot O(n^2)$ for \OM and \LO, respectively.

Comparing the behavior of those topologies, we see that the parallel running time indeed depends on the density of the topology, i.\,e., more dense topologies spread information more efficiently, which results in a faster convergence. As expected, the topology $K_\mu$ performs best, the ring graph performs worst.

\pgfplotsset{every linear axis/.append style={xmax=64}}
{
\begin{figure}[tbh]
\centering
\tikzsetnextfilename{OM-different-topologies-p=1.0-efficiency}
\subfigure[Efficiency for \OM with $\pinform=1.0$\label{fig:transmissionProb10a}]{\includegraphics{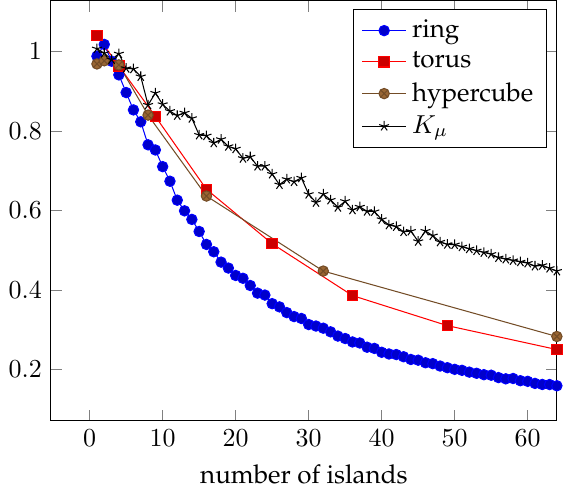}}
\tikzsetnextfilename{LO-different-topologies-p=1.0-efficiency5}
\subfigure[Efficiency for \LO with $\pinform=1.0$\label{fig:transmissionProb10b}]{\includegraphics{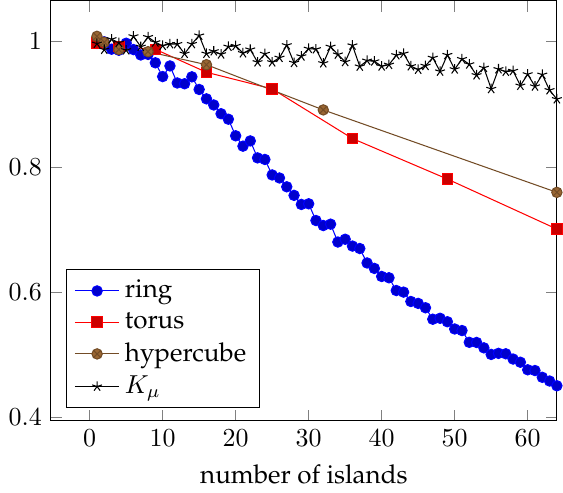}}
\tikzsetnextfilename{OM-different-topologies-p=0.1-efficiency}
\subfigure[Efficiency for \OM with $\pinform=0.1$\label{fig:transmissionProb01a}]{\includegraphics{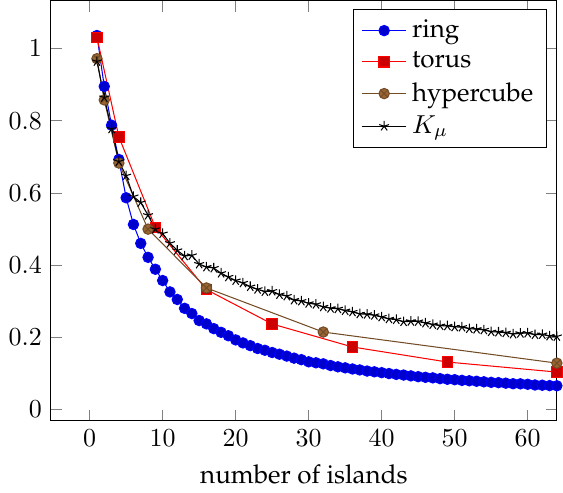}}
\tikzsetnextfilename{LO-different-topologies-p=0.1-efficiency5}
\subfigure[Efficiency for \LO with $\pinform=0.1$\label{fig:transmissionProb01b}]{\includegraphics{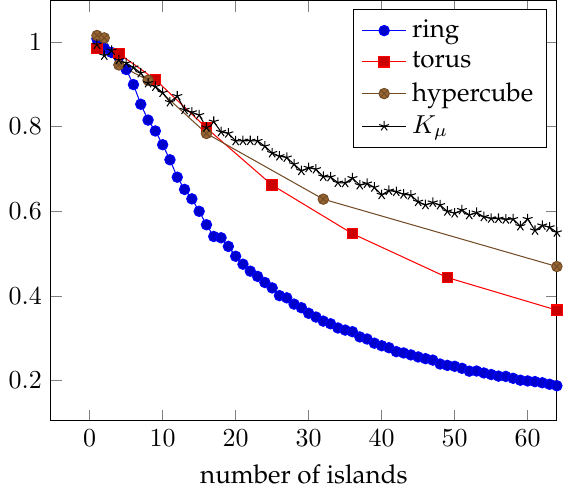}}
\caption{Efficiency for the parallel \EA with transmission probabilities $\pinform \in \{0.1, 1\}$ for $\mu \in \{1, \dots, 64\}$ numbers of islands.}
\label{fig:efficiency01}
\end{figure}
}

Next we investigate the impact of the number of islands on performance, with regard to different topologies and transmission probabilities, see Figures~\ref{fig:transmissionProb10a} and~\ref{fig:transmissionProb10b} for a transmission probability $\pinform=1.0$ and Figures~\ref{fig:transmissionProb01a} and~\ref{fig:transmissionProb01b} for a transmission probability $\pinform=0.1$. As the parallel running time shows a steep decrease, we plot the \emph{efficiency} instead, defined as
\[
\frac{\sequentialtime}{\paralleltime\cdot \mu}\;.
\]
It can be regarded as a normalized version of speedup, normalized by the number of islands. Small efficiencies indicate small speedups, large efficiencies indicate good speedups. An efficiency of~1 corresponds to a perfect linear speedup.

Again, the instance size of the benchmark functions was set to $n=256$ and the number of islands $\mu$ was chosen from 1 to 64. Only square torus graphs were used. So our torus graphs and hypercubes are only defined for square numbers and powers of~2, respectively, leading to fewer data points. For lower numbers of islands the efficiency of the algorithm is better than for larger numbers of islands. This is somewhat expected as a single \EA, i.\,e., our setting with $\mu=1$, minimizes the number of function evaluations for both \OM and \LO~\cite{Sudholt2012c}, among all EAs that only use standard bit mutation. This excludes superlinear speedups on \OM and \LO, for such EAs.

It can be seen that more dense topologies are more efficient than sparse topologies. Also, the efficiency is decreasing with a higher number of islands. In accordance with our theoretical analyses, the efficiency decreases more rapidly for \OM. For \OM, and $\pinform = \Omega(1)$, only values $\mu=O(\log n)$ were guaranteed to give a linear speedup. And indeed the efficiency in Figure~\ref{fig:transmissionProb01a} degrades quite quickly for \OM and $\pinform=1$.

Higher numbers of islands are still efficient for \LO. For the ring, the range of good $\mu$-values is up to $\mu=O(\sqrt{n})$. This is reflected in Figure~\ref{fig:transmissionProb01b} as the efficiency degrades as $\mu$ increases beyond $\sqrt{n} = 16$. For denser topologies the efficiency only degrades for large~$\mu$. The complete graph remains effective throughout the whole scale---even stronger, for values up to~$\mu=256$ (not shown in Figure~\ref{fig:efficiency01}) the efficiency was always above~$0.75$. This was also expected as $\mu = \Theta(n)$ still guarantees a linear speedup for \LO.

Comparing the running time behavior for different transmission probabilities, the plots confirm again that in our examples a higher transmission probability for individuals allows for a better overall performance.

\section{Conclusions}
\label{sec:Conclusion}
We have provided a new method for the running time analysis of parallel evolutionary algorithms, including applications to a set of well-known and illustrative example functions.
Our method provides a way of automatically transforming running time bounds obtained for panmictic EAs to parallel EAs with spatial structures.
In addition to a general result, we have provided methods tailored towards specific topologies: ring graphs, torus graphs, hypercubes and complete graphs. The latter also covers offspring populations and random migration topologies as special cases. Our results can estimate the expected parallel running time, and hence the speedup obtained through parallelization. They also bound the expected total communication effort in terms of the total number of individuals migrated.

Our example applications revealed insights which are remarkable in their own right, see Table~\ref{tab:running-times-best-mu-p=1} and a more general version in Table~\ref{tab:running-times-full-throttle}. Compared to upper bounds obtained for a single panmictic island by the fitness-level method, for ring graphs the expected waiting time for an improvement can be replaced by its square root in the parallel running time, provided the number of islands is large enough and improvements are transmitted efficiently, i.\,e., $\pinform = \Omega(1)$. This leads to a speedup of order $\log n$ for \OM{} and of order $\sqrt{n}$ for unimodal functions like \LO{}. On $\Jump_k$ the speedup is even of order $n^{k/2}$. A similar effect is observed for torus graphs where the expected waiting time can be replaced by its third root. The hypercube reduces the expected waiting time on each level to its logarithm, and on the complete graph it is reduced to a constant, again provided there are sufficiently many islands. This way, even on functions like \LO{} and $\Jump_k$ ($3 \le k = O(n/\log n)$) the expected parallel time can be reduced to $O(n)$. In all these results the population size can be chosen in such a way that the total number of function evaluations does not increase, in an asymptotic sense. The ``optimal'' population sizes have been stated explicitly (cf.\ Tables~\ref{tab:running-times-best-mu-p=1} and~\ref{tab:running-times-full-throttle}), therefore giving hints on how to parametrize parallel EAs.

The tables also reveal that in certain situations there is a tradeoff between the expected parallel time and the communication effort. For instance, on \LO the torus graph has the smallest communication effort of $O(n^2)$ at the expense of a higher parallel time bound of $O(n^{4/3})$. The complete graph has the smallest bound for the parallel time, $O(n)$, but the largest communication costs: $O(n^3)$. The hypercube provides a good compromise, combining the smallest bounds up to polylogarithmic factors. A similar observation can be made for $\Jump_k$, but there the hypercube is the better choice than the complete graph (strictly better in terms of communication costs and equally good in the parallel time bound). In all our examples the ring never performed better than torus graphs in both objectives.

Future work should deal with lower bounds on the running time of parallel evolutionary algorithms. Also in our example functions no diversity was needed. Further studies are needed in order to better understand how the topology and the parameters of migration affect diversity, and how diversity helps for optimizing more difficult, multimodal problems.

\subsection*{Acknowledgments}
The authors were supported by postdoctoral fellowships from the German Academic Exchange Service while visiting the International Computer Science Institute, Berkeley, CA, USA. Dirk Sudholt was also supported by EPSRC
grant EP/D052785/1.

\bibliographystyle{abbrv}

\begin{thebibliography}{10}

\bibitem{Alba2002}
E.~{A}lba.
\newblock {P}arallel evolutionary algorithms can achieve super-linear
  performance.
\newblock {\em {I}nformation {P}rocessing {L}etters}, 82(1):7--13, 2002.

\bibitem{Alba2005}
E.~Alba.
\newblock {\em Parallel Metaheuristics: A New Class of Algorithms}.
\newblock Wiley-Interscience, 2005.

\bibitem{Alba2004}
E.~Alba and G.~Luque.
\newblock Growth curves and takeover time in distributed evolutionary
  algorithms.
\newblock In {\em Proceedings of the Genetic and Evolutionary Computation
  Conference}, volume 3102 of {\em LNCS}, pages 864--876. Springer, 2004.

\bibitem{Cant`u-Paz1997}
E.~{C}ant\'{u} {P}az.
\newblock A survey of parallel genetic algorithms.
\newblock Technical report, Illinois Genetic Algorithms Laboratory, University
  of Illinois at Urbana Champaign, Urbana, IL, 1997.

\bibitem{Doerr2007e}
B.~Doerr, E.~Happ, and C.~Klein.
\newblock A tight analysis of the (1+1)-{EA} for the single source shortest
  path problem.
\newblock In {\em Proceedings of the IEEE Congress on Evolutionary Computation
  (CEC~'07)}, pages 1890--1895. IEEE Press, 2007.

\bibitem{Doerr2011d}
B.~Doerr, E.~Happ, and C.~Klein.
\newblock Tight analysis of the (1+1)-{EA} for the single source shortest path
  problem.
\newblock {\em Evolutionary Computation}, 19(4):673--691, 2011.

\bibitem{Droste2002}
S.~Droste, T.~Jansen, and I.~Wegener.
\newblock On the analysis of the (1+1) evolutionary algorithm.
\newblock {\em Theoretical Computer Science}, 276:51--81, 2002.

\bibitem{Giacobini2005a}
M.~Giacobini, E.~Alba, A.~Tettamanzi, and M.~Tomassini.
\newblock Selection intensity in cellular evolutionary algorithms for regular
  lattices.
\newblock {\em IEEE Transactions on Evolutionary Computation}, 9:489--505,
  2005.

\bibitem{Giacobini2003}
M.~Giacobini, E.~Alba, and M.~Tomassini.
\newblock Selection intensity in asynchronous cellular evolutionary algorithms.
\newblock In {\em Proceedings of the Genetic and Evolutionary Computation
  Conference (GECCO~'03)}, pages 955--966. Springer, 2003.

\bibitem{Giacobini2003a}
M.~Giacobini, M.~Tomassini, and A.~Tettamanzi.
\newblock Modelling selection intensity for linear cellular evolutionary
  algorithms.
\newblock In {\em Proceedings of the Sixth International Conference on
  Artificial Evolution, Evolution Artificielle}, pages 345--356. Springer,
  2003.

\bibitem{Giacobini2005}
M.~Giacobini, M.~Tomassini, and A.~Tettamanzi.
\newblock Takeover time curves in random and small-world structured
  populations.
\newblock In {\em Proceedings of the Genetic and Evolutionary Computation
  Conference (GECCO~'05)}, pages 1333--1340. ACM Press, 2005.

\bibitem{Gutjahr2008a}
W.~J. Gutjahr and G.~Sebastiani.
\newblock Runtime analysis of ant colony optimization with best-so-far
  reinforcement.
\newblock {\em Methodology and Computing in Applied Probability}, 10:409--433,
  2008.

\bibitem{He2004}
J.~He and X.~Yao.
\newblock A study of drift analysis for estimating computation time of
  evolutionary algorithms.
\newblock {\em Natural Computing}, 3(1):21--35, 2004.

\bibitem{Jansen2002}
T.~Jansen and I.~Wegener.
\newblock On the analysis of evolutionary algorithms---a proof that crossover
  really can help.
\newblock {\em Algorithmica}, 34(1):47--66, 2002.

\bibitem{Johannsen2010}
D.~Johannsen.
\newblock {\em Random Combinatorial Structures and Randomized Search
  Heuristics}.
\newblock PhD thesis, Universit{\"a}t des Saarlandes, Saarbr{\"u}cken, Germany
  and the Max-Planck-Institut f{\"u}r Informatik, 2010.

\bibitem{Johannsen2010a}
D.~Johannsen, P.~P. Kurur, and J.~Lengler.
\newblock Can quantum search accelerate evolutionary algorithms?
\newblock In {\em Proceedings of the Genetic and Evolutionary Computation
  Conference (GECCO 2010)}, pages 1433--1440. ACM, 2010.

\bibitem{Koetzing2011a}
T.~K{\"o}tzing, D.~Sudholt, and M.~Theile.
\newblock How crossover helps in pseudo-{B}oolean optimization.
\newblock In {\em Proceedings of the 13th Annual Genetic and Evolutionary
  Computation Conference (GECCO~2011)}, pages 989--996. ACM Press, 2011.

\bibitem{Lassig2010}
J.~L{\"a}ssig and D.~Sudholt.
\newblock The benefit of migration in parallel evolutionary algorithms.
\newblock In {\em Proceedings of the Genetic and Evolutionary Computation
  Conference (GECCO 2010)}, pages 1105--1112. ACM Press, 2010.

\bibitem{Lassig2010a}
J.~L{\"a}ssig and D.~Sudholt.
\newblock General scheme for analyzing running times of parallel evolutionary
  algorithms.
\newblock In {\em 11th International Conference on Parallel Problem Solving
  from Nature (PPSN~2010)}, volume 6238 of {\em LNCS}, pages 234--243.
  Springer, 2010.

\bibitem{Lassig2011}
J.~L{\"a}ssig and D.~Sudholt.
\newblock Adaptive population models for offspring populations and parallel
  evolutionary algorithms.
\newblock In {\em Proceedings of the 11th Workshop on Foundations of Genetic
  Algorithms (FOGA~2011)}, pages 181--192. ACM Press, 2011.

\bibitem{Lassig2011a}
J.~L{\"a}ssig and D.~Sudholt.
\newblock Analysis of speedups in parallel evolutionary algorithms for
  combinatorial optimization.
\newblock In {\em 22nd International Symposium on Algorithms and Computation
  (ISAAC~2011)}, volume 7074 of {\em LNCS}, pages 405--414. Springer, 2011.

\bibitem{Lehre2011}
P.~K. Lehre.
\newblock Fitness-levels for non-elitist populations.
\newblock In {\em Proceedings of the 13th Annual Genetic and Evolutionary
  Computation Conference (GECCO~'11)}, pages 2075--2082. ACM Press, 2011.

\bibitem{Luque2011}
G.~Luque and E.~Alba.
\newblock {\em Parallel Genetic Algorithms--Theory and Real World
  Applications}, volume 367 of {\em Studies in Computational Intelligence}.
\newblock Springer, 2011.

\bibitem{Mambrini2012}
A.~Mambrini, D.~Sudholt, and X.~Yao.
\newblock Homogeneous and heterogeneous island models for the set cover
  problem.
\newblock In {\em Parallel Problem Solving from Nature (PPSN~2012)}. Springer,
  2012.
\newblock To appear.

\bibitem{Nedjah2006}
N.~Nedjah, L.~de~Macedo~Mourelle, and E.~Alba.
\newblock {\em {P}arallel {E}volutionary {C}omputations}.
\newblock Springer, May 2006.

\bibitem{Neumann2011}
F.~Neumann, P.~S. Oliveto, G.~Rudolph, and D.~Sudholt.
\newblock On the effectiveness of crossover for migration in parallel
  evolutionary algorithms.
\newblock In {\em Proceedings of the Genetic and Evolutionary Computation
  Conference (GECCO~2011)}, pages 1587--1594. ACM Press, 2011.

\bibitem{Neumann2009}
F.~Neumann, D.~Sudholt, and C.~Witt.
\newblock Analysis of different {MMAS} {ACO} algorithms on unimodal functions
  and plateaus.
\newblock {\em Swarm Intelligence}, 3(1):35--68, 2009.

\bibitem{Rowe2008}
J.~Rowe, B.~Mitavskiy, and C.~Cannings.
\newblock Propagation time in stochastic communication networks.
\newblock In {\em Second IEEE International Conference on Digital Ecosystems
  and Technologies}, pages 426--431, 2008.

\bibitem{Rowe2012}
J.~Rowe and D.~Sudholt.
\newblock The choice of the offspring population size in the
  (1,$\lambda$)~{EA}.
\newblock In {\em Proceedings of the Genetic and Evolutionary Computation
  Conference (GECCO 2012)}, 2012.
\newblock To appear.

\bibitem{Rudolph2000a}
G.~Rudolph.
\newblock On takeover times in spatially structured populations: Array and
  ring.
\newblock In {\em Proceedings of the 2nd Asia-Pacific Conference on Genetic
  Algorithms and Applications}, pages 144--151. Global-Link Publishing Company,
  2000.

\bibitem{Rudolph2006}
G.~Rudolph.
\newblock Takeover time in parallel populations with migration.
\newblock In {\em BIOMA 2006}, pages 63--72, 2006.

\bibitem{Sarma1997}
J.~Sarma and K.~{D}e {J}ong.
\newblock An analysis of local selection algorithms in a spatially structured
  evolutionary algorithm.
\newblock In {\em Proceedings of the 7th International Conference on Genetic
  Algorithms}, pages 181--186. Morgan Kaufmann, 1997.

\bibitem{Skolicki2005}
Z.~Skolicki and K.~De~Jong.
\newblock The influence of migration sizes and intervals on island models.
\newblock In {\em Proceedings of the Genetic and Evolutionary Computation
  Conference (GECCO~2005)}, pages 1295--1302. ACM, 2005.

\bibitem{Sudholt2012a}
D.~Sudholt.
\newblock Parallel evolutionary algorithms.
\newblock In {\em Handbook of Computational Intelligence}. Springer.
\newblock To appear.

\bibitem{Sudholt2012c}
D.~Sudholt.
\newblock A new method for lower bounds on the running time of evolutionary
  algorithms.
\newblock {\em IEEE Transactions on Evolutionary Computation}, 2012.
\newblock To appear.

\bibitem{Sudholt2011a}
D.~Sudholt and C.~Thyssen.
\newblock Running time analysis of ant colony optimization for shortest path
  problems.
\newblock {\em Journal of Discrete Algorithms}, 10:165--180, 2012.

\bibitem{Sudholtsubmitteda}
D.~Sudholt and C.~Witt.
\newblock Runtime analysis of a binary particle swarm optimizer.
\newblock {\em Theoretical Computer Science}, 411(21):2084--2100, 2010.

\bibitem{Tomassini2005}
M.~Tomassini.
\newblock {\em Spatially Structured Evolutionary Algorithms: Artificial
  Evolution in Space and Time}.
\newblock Springer, 2005.

\bibitem{Wegener2002}
I.~Wegener.
\newblock Methods for the analysis of evolutionary algorithms on
  pseudo-{Boolean} functions.
\newblock In R.~Sarker, X.~Yao, and M.~Mohammadian, editors, {\em Evolutionary
  Optimization}, pages 349\protect\nobreakdash--369. Kluwer, 2002.

\bibitem{Zhou2012}
D.~Zhou, D.~Luo, R.~Lu, and Z.~Han.
\newblock The use of tail inequalities on the probable computational time of
  randomized search heuristics.
\newblock {\em Theoretical Computer Science}, 436(0):106 -- 117, 2012.

\end{thebibliography}
\newcommand{\noopsort}[1]{} \newcommand{\printfirst}[2]{#1}
  \newcommand{\singleletter}[1]{#1} \newcommand{\switchargs}[2]{#2#1}

\appendix

\section{Appendix}


The following inequality was brought to our attention by Jon Rowe. A proof is found in~\cite[Lemma~3]{Rowe2012}.
\begin{lemma}\label{lemma:parallel-bounds}
For any $0 \leq x \leq 1$, and any $n > 0$
\[
	(1 - x)^n \leq \frac{1}{1 + nx}.
\]
\end{lemma}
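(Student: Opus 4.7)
The plan is to reduce the claim to showing $f(x) := (1+nx)(1-x)^n \le 1$ on the interval $[0,1]$, since once this is established, dividing by $1+nx > 0$ immediately yields $(1-x)^n \le 1/(1+nx)$. The natural approach is calculus on the one-variable function~$f$: evaluate at the left endpoint and show monotonicity.

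First I would verify the boundary value $f(0) = 1$, so equality holds at $x=0$. Next I would differentiate, using the product and chain rules, obtaining
\[
f'(x) \;=\; n(1-x)^n \;-\; n(1+nx)(1-x)^{n-1} \;=\; n(1-x)^{n-1}\bigl[(1-x)-(1+nx)\bigr] \;=\; -n(n+1)\,x(1-x)^{n-1}.
\]
For $x \in [0,1)$ and $n > 0$, every factor of the last expression is non-negative, so $f'(x) \le 0$. Hence $f$ is non-increasing on $[0,1)$, and by continuity on $[0,1]$; combined with $f(0)=1$ this gives $f(x) \le 1$ throughout $[0,1]$, which is the desired bound.

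There is a minor technical wrinkle when $n \in (0,1)$: the factor $(1-x)^{n-1}$ blows up as $x \to 1^-$, so the derivative is only guaranteed to exist on $(0,1)$. This is not really an obstacle, since $f$ is continuous on $[0,1]$ and the sign of $f'$ on $(0,1)$ is still what controls monotonicity; alternatively one can simply restrict to $x \in [0,1)$ and note that the case $x=1$ follows from continuity (and is trivial, as $f(1)=0$). An alternative route, avoiding derivatives entirely, is to chain two classical inequalities: $(1-x)^n \le e^{-nx}$ (from $1-x \le e^{-x}$) and $e^{-nx} \le 1/(1+nx)$ (from $e^{y} \ge 1+y$ with $y=nx$). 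I would mention this as a backup, but the calculus argument is both shorter and self-contained, so I would use it as the primary proof. The main obstacle here is purely bookkeeping: making sure the derivative is expressed in a manifestly signed form, which the factorization above accomplishes.
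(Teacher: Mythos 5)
Your argument is correct. The factorization $f'(x) = -n(n+1)\,x(1-x)^{n-1}$ checks out, it is manifestly $\le 0$ on $[0,1)$ for all real $n>0$, and together with $f(0)=1$ and continuity at $x=1$ this gives $(1+nx)(1-x)^n \le 1$ on all of $[0,1]$, which is the claim. Note, however, that this paper does not actually prove the lemma: it is stated in the appendix with attribution to Jon Rowe and a pointer to Lemma~3 of the cited Rowe--Sudholt paper, so there is no in-paper proof to compare against. Your calculus proof (and equally the backup chain $(1-x)^n \le e^{-nx} \le 1/(1+nx)$, both links of which are the standard $1+y\le e^y$ inequality) is a perfectly good self-contained replacement for that external citation; the only point worth keeping explicit, which you already flag, is that for $n\in(0,1)$ the derivative formula degenerates at $x=1$, so the endpoint must be handled by continuity or by the trivial observation $f(1)=0$.
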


We also state Johannsen's variable drift theorem~\cite{Johannsen2010}, in a version with slightly improved conditions~\cite{Rowe2012}.
\begin{theorem}[Johannsen's Variable Drift Theorem~\cite{Johannsen2010,Rowe2012}]
\label{drift:johannsen}
Consider a stochastic process $\{X\}_{t \ge 0}$ on $\{0, 1, \dots, m\}$, with $m \in \N$.
Suppose there is a monotonic increasing function $h: \mathbb{R}^+ \rightarrow \mathbb{R}^+$ such that the function $1/h(x)$ is integrable on $\{1, \dots,  m\}$, and with
\[
	\E{X_t - X_{t+1} \mid X_t = k} \geq h(k)
\]
for all $k \in \{1, \ldots, m\}$. Then the expected first hitting time of state~$0$ is at most
\[
	\frac{1}{h(1)} + \int_1^m \frac{1}{h(x)} \;\mathrm{d}x.
\]
\end{theorem}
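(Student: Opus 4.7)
The plan is to prove Johannsen's Variable Drift Theorem by constructing a Lyapunov/potential function and reducing to the additive drift theorem of He and Yao~\cite{He2004}. Define
\[
g(0) := 0, \qquad g(k) := \frac{1}{h(1)} + \int_1^{k} \frac{1}{h(x)} \,\mathrm{d}x \quad \text{for } k \in \{1,\dots,m\}.
\]
The idea is to show that the transformed process $\{g(X_t)\}_{t \ge 0}$ has additive drift at least~$1$, so that its expected hitting time of $0$ equals the expected hitting time of $X_t$, and is bounded above by $g(X_0) \le g(m)$, which is exactly the claimed bound.

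The key step is to verify the pointwise inequality
\[
g(k) - g(j) \;\ge\; \frac{k-j}{h(k)} \qquad \text{for all } 1 \le k \le m, \; 0 \le j \le m.
\]
I would split into three cases. If $j \ge 1$ and $j \le k$, monotonicity of $h$ gives $1/h(x) \ge 1/h(k)$ for $x \in [j,k]$, so $g(k)-g(j) = \int_j^k 1/h(x)\,\mathrm{d}x \ge (k-j)/h(k)$. If $j \ge 1$ and $j > k$, monotonicity gives $1/h(x) \le 1/h(k)$ on $[k,j]$, yielding $g(j)-g(k) \le (j-k)/h(k)$, which rearranges to the same inequality. Finally, for $j=0$, I would use $g(k) \ge 1/h(k) + (k-1)/h(k) = k/h(k)$, where the first term comes from $1/h(1) \ge 1/h(k)$ and the integral is bounded using monotonicity of $h$. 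All three cases rely only on the assumption that $h$ is monotonically increasing and positive.

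Given this inequality, conditioning on $X_t = k \ge 1$ and setting $j = X_{t+1}$ yields, by linearity of expectation,
\[
\E{g(X_t) - g(X_{t+1}) \mid X_t = k} \;\ge\; \frac{\E{X_t - X_{t+1} \mid X_t=k}}{h(k)} \;\ge\; \frac{h(k)}{h(k)} \;=\; 1,
\]
where the last step uses the hypothesis on the drift of $X_t$. This is precisely the additive drift condition on the process $g(X_t)$, which takes values in $[0,g(m)]$ and hits $0$ exactly when $X_t$ does. Applying the additive drift theorem of He and Yao gives $\E{T} \le g(X_0) \le g(m)$, which is the stated bound.

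The main obstacle is the pointwise inequality $g(k)-g(j) \ge (k-j)/h(k)$, since it must hold in both directions (for $j<k$ and $j>k$), and the additive drift reduction would fail without it: one cannot, for example, assume $X_{t+1} \le X_t$, only that $X_t$ has positive drift toward~$0$. Handling the boundary case $j=0$ cleanly (where $g$ makes a ``jump'' from $0$ to $1/h(1)$) is the most delicate point and is what forces the summand $1/h(1)$ outside the integral in the final bound. Everything else (additive drift, linearity of expectation, and the integrability of $1/h$ which guarantees $g(m) < \infty$) is routine.
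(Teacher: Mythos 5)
The paper does not prove this theorem at all: it is quoted in the appendix as an imported tool, with the proof deferred to Johannsen's thesis and to Rowe and Sudholt~\cite{Johannsen2010,Rowe2012}. Your argument is correct and is essentially the canonical proof of variable drift theorems: define the potential $g$ as the ``integrated inverse drift,'' verify the pointwise inequality $g(k)-g(j)\ge (k-j)/h(k)$ in all three cases (including the case $j>k$, which you rightly identify as the reason one cannot shortcut via monotone trajectories, and the boundary case $j=0$, which is what produces the separate $1/h(1)$ term), and then apply additive drift to $g(X_t)$. All three case verifications use only monotonicity and positivity of $h$ and are correct as you state them, and the reduction to He--Yao additive drift with drift constant $1$ is sound since $g(X_t)=0$ iff $X_t=0$ and $g(X_0)\le g(m)$ is exactly the claimed bound. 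The only point worth being pedantic about is that the additive drift theorem is usually stated with conditioning on the history rather than merely on $X_t=k$; since the theorem's hypothesis is itself phrased with conditioning on $X_t=k$, your proof inherits whatever Markovian/uniformity convention the statement uses, which is standard and acceptable.
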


\begin{landscape}
\begin{table}[h!]
\centering\small
\begin{tabular}{|l@{\;}|@{\;}l@{\;}|@{\;}l@{\;}|@{\;}l@{\;}|@{\;}l@{\;}|@{\;}l@{\;}|@{\;}l@{\;}|}\hline
                     & \EA               & Ring            & Grid/Torus            & Hypercube   & Complete/$K_{\mu}$ & $K_{\mu}$ with $\pinform = O(1/\mu)$\\\hline
$\OM$     &&&&&&\\
$\E{\paralleltime}$ & $\Theta(n\log n)$ & $\Oh{\frac{n}{\pinform^{1/2}} + \frac{n \log n}{\mu}}$                   & $\Oh{\frac{n}{\pinform^{2/3}} + \frac{n \log n}{\mu}}$       & $\Oh{\frac{n}{\pinform} + \frac{n \log n}{\mu}}$            & $\Oh{\frac{n}{\pinform}+\frac{n \log n}{\mu}}$ & $\Oh{\frac{n \log \mu}{\pinform \mu}+\frac{n \log n}{\mu}}$\\
$\E{\sequentialtime}$ & $\Theta(n\log n)$ & $\Oh{\frac{\mu n}{\pinform^{1/2}} + n \log n}$                   & $\Oh{\frac{\mu n}{\pinform^{2/3}} + n \log n}$       & $\Oh{\frac{\mu n}{\pinform} + n \log n}$            & $\Oh{\frac{\mu n}{\pinform}+ n \log n}$ & $\Oh{\frac{\mu (\log \mu) n}{\pinform \mu}+ n \log n}$\\
$\E{\communication}$ & $0$ & $\Oh{\pinform^{1/2} \mu n + \pinform n \log n}$                   & $\Oh{\pinform^{1/3} \mu n + \pinform n \log n}$       & {\tiny{}$\Oh{\mu n (\log \mu) + \pinform (\log \mu) n \log n}$}            & $\Oh{\mu^2 n + \pinform \mu n \log n}$ & $\Oh{\mu (\log \mu) n + \pinform \mu n \log n}$\\\hline
$\LO$     &&&&&&\\
$\E{\paralleltime}$ & $\Theta(n^2)$     &  $\Oh{\frac{n^{3/2}}{\pinform^{1/2}} + \frac{n^2}{\mu}}$             & $\Oh{\frac{n^{4/3}}{\pinform^{2/3}} + \frac{n^2}{\mu}}$      & $\Oh{\frac{n \log n}{\pinform} + \frac{n^2}{\mu}}$       & $\Oh{\frac{n}{\pinform}+\frac{n^2}{\mu}}$ & $\Oh{\frac{n \log \mu}{\pinform \mu}+\frac{n^2}{\mu}}$\\
$\E{\sequentialtime}$ & $\Theta(n^2)$     &  $\Oh{\frac{\mu n^{3/2}}{\pinform^{1/2}} + n^2}$             & $\Oh{\frac{\mu n^{4/3}}{\pinform^{2/3}} + n^2}$      & $\Oh{\frac{\mu n \log n}{\pinform} + n^2}$       & $\Oh{\frac{\mu n}{\pinform}+n^2}$ & $\Oh{\frac{\mu (\log \mu) n}{\pinform \mu}+ n^2}$\\
$\E{\communication}$ & $0$     &  $\Oh{\pinform^{1/2} \mu n^{3/2} + \pinform n^2}$             & $\Oh{\pinform^{1/3} \mu n^{4/3} + \pinform n^2}$      & {\tiny{}$\Oh{\mu n (\log n)(\log \mu) + \pinform (\log \mu) n^2}$}       & $\Oh{\mu^2 n + \pinform \mu n^2}$ & $\Oh{\mu (\log \mu) n + \pinform \mu n^2}$\\\hline
unimodal &&&&&&\\
$\E{\paralleltime}$ & $O(dn)$     &  $\Oh{\frac{d n^{1/2}}{\pinform^{1/2}} + \frac{dn}{\mu}}$             & $\Oh{\frac{dn^{1/3}}{\pinform^{2/3}} + \frac{dn}{\mu}}$     & $\Oh{\frac{d \log n}{\pinform} + \frac{dn}{\mu}}$        & $\Oh{\frac{d}{\pinform}+\frac{dn}{\mu}}$ & $\Oh{\frac{d \log \mu}{\pinform \mu}+\frac{dn}{\mu}}$\\
$\E{\sequentialtime}$ & $O(dn)$     &  $\Oh{\frac{d \mu n^{1/2}}{\pinform^{1/2}} + dn}$             & $\Oh{\frac{d \mu n^{1/3}}{\pinform^{2/3}} + dn}$     & $\Oh{\frac{d \mu \log n}{\pinform} + dn}$        & $\Oh{\frac{d \mu}{\pinform}+dn}$ & $\Oh{\frac{d \log \mu}{\pinform \mu}+dn}$\\
$\E{\communication}$ & $0$     &  $\Oh{\pinform^{1/2} d \mu n^{1/2} + \pinform dn}$             & $\Oh{\pinform^{1/3} d \mu n^{1/3} + \pinform dn}$     & {\tiny{}$\Oh{d \mu (\log n)(\log \mu) + \pinform (\log \mu) dn}$}        & $\Oh{d \mu^2 + \pinform d \mu n}$ & $\Oh{d \mu \log \mu + \pinform \mu d n}$\\\hline
$\Jump_k$ &&&&&&\\
$\E{\paralleltime}$ & $\Theta(n^k)$     & $\Oh{\frac{n^{k/2}}{\pinform^{1/2}} + \frac{n^k}{\mu}}$             & $\Oh{\frac{n + n^{k/3}}{\pinform^{2/3}} + \frac{n^k}{\mu}}$    & $\Oh{\frac{n}{\pinform} + \frac{n^k}{\mu}}$       & $\Oh{\frac{n}{\pinform}+\frac{n^k}{\mu}}$ & $\Oh{\frac{n \log \mu}{\pinform \mu}+\frac{n^k}{\mu}}$\\
$\E{\sequentialtime}$ & $\Theta(n^k)$     & $\Oh{\frac{\mu n^{k/2}}{\pinform^{1/2}} + n^k}$             & $\Oh{\frac{\mu (n + n^{k/3})}{\pinform^{2/3}} + n^k}$    & $\Oh{\frac{\mu n}{\pinform} + n^k}$       & $\Oh{\frac{\mu n}{\pinform}+n^k}$ & $\Oh{\frac{n \log \mu}{\pinform}+n^k}$\\
$\E{\communication}$ & $0$     & $\Oh{\pinform^{1/2} \mu n^{k/2} + \pinform n^k}$             & $\Oh{\pinform^{1/3} \mu n^{k/3} + \pinform n^k}$    & {\tiny{}$\Oh{\mu n (\log \mu) + \pinform (\log \mu) n^k}$}       & $\Oh{\mu^2 n + \pinform \mu n^k}$ & $\Oh{\mu (\log \mu) n + \pinform \mu n^k}$\\
\hline
\end{tabular}\smallskip
\caption{Asymptotic bounds on expected parallel ($\paralleltime$, number of generations) and sequential ($\sequentialtime$, number of function evaluations) running times and expected communication efforts ($\communication$, total number of migrated individuals) for various $n$-bit functions and island models with $\mu$ islands running the \EA and using migration probability~$\pinform$. The number of islands $\mu$ was always chosen to give the best possible upper bound on the parallel running time, while not increasing the upper bound on the sequential running time by more than a constant factor. For unimodal functions $d+1$ denotes the number of function values. See~\cite{Droste2002} for bounds for the \EA. Results for $\Jump_k$ were restricted to $3 \le k = O(n/\log n)$ for simplicity.}
\label{tab:running-times-full-throttle}
\end{table}
\end{landscape}

\end{document}